\newtheorem{theorem}{Theorem}
\newtheorem{lemma}{Lemma}
\newtheorem{proposition}{Proposition}
\newtheorem{corollary}{Corollary}
\theoremstyle{definition}
\newtheorem{remark}{Remark}
\newcommand{\diverge}{\to\infty}
\newcommand{\reals}{{\mathbb{R}}}
\newcommand{\naturals}{{\mathbb{N}}}
\newcommand{\diff}{{\rm d}}
\newcommand{\Expect}{\mathbb{E}}
\newcommand{\expect}[1]{\mathbb{E}\left[ #1 \right]}
\newcommand{\Prob}{\mathbb{P}}
\newcommand{\prob}[1]{ \mathbb{P}\left\{ #1 \right\} }
\newcommand{\toprob}{\xrightarrow{\Prob}}
\newcommand{\iid}{i.i.d.\xspace}
\newcommand{\pth}[1]{\left( #1 \right)}
\newcommand{\qth}[1]{\left[ #1 \right]}
\newcommand{\sth}[1]{\left\{ #1 \right\}}
\newcommand{\abth}[1]{\left | #1 \right |}
\newcommand{\norm}[1]{\left\|{#1} \right\|}
\newcommand{\lnorm}[2]{\left\|{#1} \right\|_{{#2}}}
\newcommand{\Fnorm}[1]{\lnorm{#1}{\rm F}}
\newcommand{\iprod}[2]{\left \langle #1, #2 \right\rangle}
\newcommand{\Iprod}[2]{\langle #1, #2 \rangle}
\newcommand{\indc}[1]{{\mathbf{1}_{\left\{{#1}\right\}}}}
\newcommand{\calA}{{\mathcal{A}}}
\newcommand{\calB}{{\mathcal{B}}}
\newcommand{\calE}{{\mathcal{E}}}
\newcommand{\calF}{{\mathcal{F}}}
\newcommand{\calH}{{\mathcal{H}}}
\newcommand{\calI}{{\mathcal{I}}}
\newcommand{\calK}{{\mathcal{K}}}
\newcommand{\calN}{{\mathcal{N}}}
\newcommand{\calS}{{\mathcal{S}}}
\newcommand{\im}{\mathsf{i}}
\newcommand{\poly}{\mathsf{poly}}
\renewcommand{\hat}{\widehat}
\renewcommand{\tilde}{\widetilde}
\begin{document}

\title{On Learning Over-parameterized Neural Networks: 
A Functional Approximation Perspective}

%
\author{
Lili Su \\
CSAIL, MIT\\
{lilisu@mit.edu}
\and 
Pengkun Yang\\
Department of Electrical Engineering\\
Princeton University\\
{pengkuny@princeton.edu}
}

\date{\today}

\maketitle

\begin{abstract}
We consider training over-parameterized two-layer neural networks with Rectified Linear Unit (ReLU) using gradient descent (GD) method. 
Inspired by a recent line of work, we study the evolutions of network prediction errors across GD iterations, which can be  neatly described in a matrix form. 
When the network is sufficiently over-parameterized, these matrices individually approximate {\em an} integral operator which is determined by 
the feature vector distribution $\rho$ only. 
Consequently, GD method can be viewed as {\em approximately} applying the powers of this integral operator on the underlying/target function $f^*$ that generates the responses/labels. 
 
We show that if $f^*$ admits a low-rank approximation with respect to the eigenspaces of this integral operator, then
the empirical risk decreases to this low-rank approximation error at a linear rate 
which is determined by $f^*$ and $\rho$ only, i.e., the rate is independent of the sample size $n$. 
Furthermore, if $f^*$ has zero low-rank approximation error, 
then, as long as the width of the neural network is $\Omega(n\log n)$, 
the empirical risk decreases to $\Theta(1/\sqrt{n})$. 
To the best of our knowledge, this is the first result showing the sufficiency of nearly-linear network over-parameterization.  We provide an application of our general results to the setting where $\rho$ is the uniform distribution on the spheres and $f^*$ is a polynomial. 
Throughout this paper, we consider the scenario where the input dimension $d$ is fixed. 

\end{abstract}

\section{Introduction}
\label{sec: intro}
Neural networks have been successfully applied in many real-world machine learning applications. However, a thorough understanding of the theory behind their practical success, even for two-layer neural networks, is still lacking. For example, despite learning optimal neural networks is provably NP-complete \cite{brutzkus2017globally,blum1989training}, 
in practice, even the neural networks found by the simple first-order methods perform well \cite{krizhevsky2012imagenet}. Additionally, in sharp contrast to traditional learning theory, over-parameterized neural networks (more parameters than the size of the training dataset) are observed to enjoy smaller training and even smaller generalization errors \cite{zhang2016understanding}. In this paper, we focus on training over-parameterized two-layer neural networks with Rectified Linear Unit (ReLU) using gradient descent (GD) method. Our results can be extended to other activation functions that satisfy some regularity conditions; see \cite[Theorem 2]{ghorbani2019linearized} for an example.  
The techniques derived and insights obtained in this paper might be applied to deep neural networks as well, for which similar matrix representation exists \cite{du2018gradient}. 

Significant progress has been made in understanding the role of over-parameterization in training neural networks with first-order methods \cite{allen2018learning,du2018gradient,arora2019fine,oymak2019towards,mei2018mean,li2018learning,zou2018stochastic,du2018gradientlee,allen2018convergence,cao2019generalization}; with proper random network initialization, (stochastic) GD converges to a (nearly) global minimum provided that the width of the network $m$ is {\em polynomially} large in the size of the training dataset $n$. However, 
neural networks seem to interpolate the training data as soon as the number of parameters exceed the size of the training dataset by a constant factor \cite{zhang2016understanding,oymak2019towards}.
%
To the best of our knowledge, a provable justification of why such mild over-parametrization is sufficient for successful gradient-based training is still lacking. 
Moreover, the convergence rates derived in many existing work approach 0 as $n\diverge$; see Section \ref{sec: related work} for details. In many applications the volumes of the datasets are huge --  the ImageNet 
dataset \cite{Imagenet} has 14 million images. For those applications, a non-diminishing (i.e., constant w.\,r.\,t.\,$n$) convergence rate is more desirable.
 In this paper, our goal is to characterize a {\em constant} (w.\,r.\,t.\,$n$) convergence rate while improving the sufficiency guarantee of network over-parameterization. Throughout this paper, we focus on 
 the setting where the dimension of the feature vector $d$ is fixed, leaving the high dimensional region as one future direction. 

Inspired by a recent line of work 
\cite{du2018gradient,arora2019fine}, 
we focus on characterizing the evolutions of the neural network prediction errors under GD method. 
 This focus is motivated by the fact that the neural network representation/approximation of a given function might not be unique \cite{klusowski2018approximation}, and this focus is also validated by experimental neuroscience \cite{marder2006variability,athalye2018evidence}. 



\paragraph{Contributions}
It turns out that the evolution of the network prediction error can be neatly described in a matrix form.  
When the network is sufficiently over-parameterized, the matrices involved 
individually approximate {\em an} integral operator which is determined by 
the feature vector distribution $\rho$ only. 
Consequently, GD method can be viewed as {\em approximately} applying the powers of this integral operator on the underlying/target function $f^*$ that generates the responses/labels. The advantages of taking such a functional approximation perspective are three-fold: 
\begin{itemize}
	\item We showed in Theorem \ref{thm: vacuous rate} and Corollary \ref{col: corollary of diminising eigenvalues} that the existing rate characterizations in the influential line of work \cite{du2018gradient,arora2019fine,du2018gradientlee} approach zero (i.e., $\to 0$) as 
	 $n\to \infty$. This is because the spectra of these matrices, as $n$ diverges, concentrate on the spectrum of the integral operator, in which the unique limit of the eigenvalues is zero. 

	\item 
We show in Theorem \ref{thm: main theorem} 
that the training convergence rate is determined by how $f^*$ can be decomposed into the eigenspaces of an integral operator. 
This observation is also validated by a couple of empirical observations: (1) The spectrum of the MNIST data concentrates on the first a few eigenspaces \cite{lecun1998gradient}; and (2) the training is slowed down if labels are partially corrupted \cite{zhang2016understanding,arora2019fine}.  
	\item We show in Corollary \ref{thm: main theorem zero approximation error} that if $f^*$ can be decomposed into a finite number of eigenspaces of the integral operator, then $m=\Theta(n\log n)$ is sufficient for the training error to converge to $\Theta(1/\sqrt{n})$ with a constant convergence rate. 
 To the best of our knowledge, this is the first result showing the sufficiency of nearly-linear network over-parameterization. 
\end{itemize}

\paragraph{Notations}
For any $n, m\in \naturals$, let $\qth{n} := \sth{1, \cdots, n}$ and $\qth{m} := \sth{1, \cdots, m}$. 
For any $d\in \naturals$, denote the unit sphere as $\calS^{d-1} : = \sth{x: ~ x\in \reals^d, ~ \& ~ \norm{x} =1}$, where $\norm{\cdot}$ is the standard $\ell_2$ norm when it is applied to a vector. We also use $\norm{\cdot}$ for the spectral norm when it is applied to a matrix. The Frobenius norm of a matrix is denoted by $\norm{\cdot}_{F}$. 
Let $L^2(\calS^{d-1}, \rho) $ denote the space of functions with finite norm, where the inner product $\iprod{\cdot}{\cdot}_{\rho}$ and $\|\cdot\|_{\rho}^2$ are defined as $\iprod{f}{g}_{\rho} := \int_{\calS^{d-1}}f(x)g(x) \diff \rho(x)$ and $\|f\|_{\rho}^2 := \int_{\calS^{d-1}}f^2(x) \diff \rho (x) <\infty$. 
We use standard Big-$O$ notations, 
e.g., for any sequences $\sth{a_r}$ and $\sth{b_r}$, we say $a_r = O(b_r)$ or
$a_r \lesssim b_r$ if there is an absolute constant $c>0$ such that $\frac{a_r}{b_r} \le c$, 
we say $a_r = \Omega(b_r)$ or
$a_r \gtrsim b_r$ if $b_r=O(a_r)$
and we say $a_r=\omega(b_r)$ if $\lim_{r\diverge} |a_r/b_r| = \infty$.

\section{Related Work}
\label{sec: related work}
The volume of the literature on neural networks is growing rapidly, and we cannot hope to do justice to this large body of related work. 
Here we sample an incomplete list of works that are most relevant to this work. 

There have been intensive efforts in proving the (global) convergence of the simple first-order methods such as (stochastic) gradient descent \cite{brutzkus2017globally,li2017convergence,zhong2017recovery}, where the true function that generates the responses/labels is a two-layer neural network of the same size as the neural network candidates.
Notably, in this line of work, it is typically assumed that $m\le d$. 
 
Over-parameterized neural networks are observed to enjoy smaller training errors and even smaller generalization errors \cite{zhang2016understanding,li2018learning}. Allen-Zhu et al.\,\cite{allen2018learning} considered the setting where the true network is much smaller than the candidate networks, and showed that 
searching among over-parametrized network candidates smoothes the optimization trajectory and enjoys a strongly-convex-like behavior. 
Similar results 
were shown in \cite{du2018gradient,arora2019fine, oymak2019towards}. 
In particular, it was shown in an inspiring work \cite{du2018gradient}
that when $m = \Omega(n^6)$ 
and the minimum eigenvalue of some Gram matrix is positive, 
then randomly initialized GD can find an optimal neural network, under squared loss, at a linear rate. 
 However, the involved minimum eigenvalue scales in $n$, and the impacts of this scaling on the convergence and the corresponding convergence rate 
 were overlooked in \cite{du2018gradient}.  Unfortunately, taking such scaling into account, their convergence rate approaches 0 as $n\diverge$; we formally show this in Theorem \ref{thm: vacuous rate} and Corollary \ref{col: corollary of diminising eigenvalues}.   
Recently, 
\cite{oymak2019towards} showed that when $m=\Omega(n^2)$, the empirical risk (training error) goes to zero at a linear rate 
$(1-c\frac{d}{n})^t = \exp \pth{-t \log (1/(1-c\frac{d}{n}))}$, where $t$ is the GD iteration 
and $c>0$ is some absolute constant; see \cite[Corollaries 2.2 and 2.4]{oymak2019towards} for details. Here $\log (1/(1-c\frac{d}{n}))$ is the convergence rate. 
It is easy to see that 
$\log (1/(1-c\frac{d}{n})) \to 0$ as $n$ increases. 


For deep networks (which contain more than one hidden layer), the (global) convergence of (S)GD 
are shown in \cite{zou2018stochastic,du2018gradientlee,allen2018convergence} with different focuses and characterizations of over-parameterization sufficiency. In particular, \cite{zou2018stochastic} studied the binary classification problem and showed that (S)GD  can find a global minimum provided that the feature vectors with different labels are well separated and $m=\Omega(\poly(n, L))$, where $L$ is the number of hidden layers. Allen-Zhu et al.\, \cite{allen2018convergence} considered the regression problem and showed similar over-parameterization sufficiency. 
The over-parameterization sufficiency in terms of its scaling in $n$ is significantly improved in \cite{du2018gradientlee} without considering the scaling of the minimum eigenvalue of the Gram matrix in $n$.

All the above recent progress 
is established on the common observation that when the network is sufficiently over-parameterized, during training, the network weights are mainly stay within a small perturbation region centering around the initial weights. In fact, the over-parameterization sufficiency that ensures the above mild weight changes is often referred to as NTK region; see \cite{jacot2018neural} for details.  Recently, a handful of work studied linearized neural networks in high dimension \cite{ghorbani2019linearized,yehudai2019power,vempala2018gradient}. Since we consider fixed $d$, our results are not directly comparable to that line of work. 



\section{Problem Setup and Preliminaries}
%
\paragraph{Statistical learning}
We are given a training dataset $\sth{(x_i, y_i): 1\le i \le n}$ 
which consists of $n$ tuples $(x_i, y_i)$, where $x_i$'s are feature vectors that are 
identically and independently generated 
from a common but {\em unknown} distribution $\rho$ on $\reals^d$, and $y_i = f^*(x_i)$. 
We consider the problem of learning the unknown function $f^*$ with respect to the square loss. 
We refer to $f^*$ as a {\em target function}.
For simplicity, we assume 
$x_i\in \calS^{d-1}$ and $y_i\in [-1, 1]$. 
In this paper, we restrict ourselves to the family of $\rho$ that is absolutely continuous with respect to Lebesgue measure.
%
We are interested in finding a neural network to approximate $f^*$. In particular, we focus on  
two-layer fully-connected neural networks with ReLU activation,
i.e., 
\begin{align}
\label{eq: two layer model}
f_{\bm{W},\bm{a}} (x) = \frac{1}{\sqrt{m}}\sum_{j=1}^m a_j\qth{\iprod{x}{w_j}}_+, ~~ ~ \forall ~ x\in \calS^{d-1},
\end{align}
where 
$m$ is the number of hidden neurons and is assumed to be even, $\bm{W} =\pth{w_1, \cdots, w_m} \in \reals^{d\times m}$ are the weight vectors in the first layer, $\bm{a} = \pth{a_1, \cdots, a_m}$ with $a_j\in\sth{-1, 1}$ are the weights in the second layer, 
and $\qth{\cdot}_+ : = \max\sth{\cdot, 0}$ is the ReLU activation function.

Many authors assume $f^*$ is also a neural network \cite{mei2018mean,allen2018learning,saad1996dynamics,li2017convergence,tian2016symmetry}. 
Despite this popularity, a target function $f^*$ is not necessarily a neural network.
One advantage of working with $f^*$ directly is, as can be seen later, certain properties of $f^*$ are closely related to whether $f^*$ can be learned quickly by GD method or not. 
Throughout this paper, for simplicity, we do not consider the scaling in $d$ and treat $d$ as a constant.  
%
%

%
\paragraph{Empirical risk minimization via gradient descent} 
For each $k=1, \cdots, m/2$:  Initialize $w_{2k-1} \sim \calN(\bm{0},  {\bf I})$, and $a_{2k-1}=1$ with probability $\frac{1}{2}$, and  $a_{2k-1}=-1$ with probability $\frac{1}{2}$. Initialize $w_{2k} =w_{2k-1}$ and $a_{2k}= -a_{2k-1}$.  
All randomnesses in this initialization are independent, and are independent of the dataset. 
This initialization is chosen to guarantee zero output at initialization.  Similar initialization is adopted in \cite[Section 3]{chizat2018note} and \cite{woodworth2019kernel}. \footnote{Our analysis might be adapted to other initialization schemes, such as He initialization, with $m=\Omega(n^2)$. 
Nevertheless, the more stringent requirement on $m$ might only be an artifact of our analysis. }
%
We fix the second layer $\bm{a}$ and optimize the first layer $\bm{W}$ through GD 
on the empirical risk w.\,r.\,t.\,square loss \footnote{The simplification assumption that the second layer is fixed is also adopted in  
\cite{du2018gradient,arora2019fine}. Similar frozen assumption is adopted in \cite{zou2018stochastic,allen2018convergence}. We do agree this assumption might restrict the applicability of our results. Nevertheless, even this setting is not well-understood despite the recent intensive efforts. 
}:   
\begin{align}
\label{eq: general function, iterates}
L_n(\bm{W}): = \frac{1}{2n} \sum_{i=1}^n \qth{\pth{y_i - f_{\bm{W}}(x_i)}^2}.  
\end{align}
For notational convenience, 
we drop the subscript $\bm{a}$ in $f_{\bm{W}, \bm{a}}$. 
The weight matrix $\bm{W}$ is update as
\begin{align}
\label{eq: GD update}
\bm{W}^{t+1} = \bm{W}^{t} - \eta \frac{\partial L_n(\bm{W}^t)}{\partial \bm{W}^t}, 
\end{align}
where $\eta>0$ is stepsize/learning rate, and $\bm{W}^{t}$ is the weight matrix at the end of iteration $t$ with $\bm{W}^{0}$ denoting the initial weight matrix.  
For ease of exposition, let 
\begin{align}
\label{eq: prediction at time t}
\hat{y}_i(t) ~ : = ~ f_{\bm{W}^t}(x_i) = \frac{1}{\sqrt{m}} \sum_{j=1}^m a_j\qth{\iprod{w_{j}^t}{x_i}}_+, ~~~ \forall ~ i=1, \cdots, n.
\end{align}
 Notably, $\hat{y}_i(0) =0$ for $i=1, \cdots, n$. 
It can be easily deduced from \eqref{eq: GD update} that $w_j$ is updated as 
\begin{align}
w_j^{t+1} & = w_j^{t} + \frac{\eta a_j}{n\sqrt{m}} \sum_{i=1}^n \pth{y_i - \hat{y}_i(t)}
  x_i \indc{\iprod{ w_j^{t}}{x_i}  >0} \label{eq: weight update new}. 
\end{align}

\paragraph{Matrix representation}
Let $\bm{y}\in \reals^n$ be the vector that stacks the responses of 
$\{(x_i, y_i)\}_{i=1}^n$. Let $\bm{\hat{y}}(t)$ be the vector that stacks $\hat{y}_i(t)$ for $i=1, \cdots, n$ at iteration $t$. Additionally, let $\calA : = \sth{j: ~ a_j=1}$ and $\calB : = \sth{j: ~ a_j=-1}.$
The evolution of $\pth{\bm{y} - \bm{\hat{y}}(t)}$ can be neatly described in a matrix form. 
Define matrices $\bm{H}^+, \tilde{\bm{H}}^+$, and $\bm{H}^-, \tilde{\bm{H}}^-$ in $ \reals^n\times \reals^n$ as: For $t\ge 0$, and $i, i^{\prime} \in [n]$, 
\begin{align}
\label{eq: symmetric kernel at time t positive subnet}
\bm{H}^+_{ii^{\prime}}(t+1) &=  \frac{1}{nm}\iprod{x_i}{x_{i^{\prime}}} \sum_{j \in \calA}   \indc{\iprod{w_{j}^{t}}{x_{i^{\prime}}}  >0}\indc{\iprod{w_{j}^{t}}{x_{i}}  >0}, \\
\label{eq: asymmetric kernel at time t positive subnet}
\tilde{\bm{H}}^+_{ii^{\prime}}(t+1) &=  \frac{1}{nm}\iprod{x_i}{x_{i^{\prime}}} \sum_{j \in \calA}   \indc{\iprod{w_{j}^{t}}{x_{i^{\prime}}}  >0}\indc{\iprod{w_{j}^{t+1}}{x_{i}} >0}, 
\end{align}
and $\bm{H}^-_{ii^{\prime}}(t+1) $, $\tilde{\bm{H}}^-_{ii^{\prime}}(t+1)$ are defined similarly by replacing the summation over all the hidden neurons in $\calA$ in \eqref{eq: symmetric kernel at time t positive subnet} and \eqref{eq: asymmetric kernel at time t positive subnet} by the summation over $\calB$. 
It is easy to see that both $\bm{H}^+$ and $\bm{H}^-$ are positive semi-definite. 
The only difference between $\bm{H}^+_{ii^{\prime}}(t+1)$ (or $\bm{H}^-_{ii^{\prime}}(t+1)$) and $\tilde{\bm{H}}^+_{ii^{\prime}}(t+1)$ (or $\tilde{\bm{H}}^-_{ii^{\prime}}(t+1)$) is that $\indc{\iprod{w_{j}^{t}}{x_{i}} >0}$ is used in the former, whereas $\indc{\iprod{w_{j}^{t+1}}{x_{i}}  >0}$ is adopted in the latter.  
When a neural network is sufficiently over-parameterized (in particular, $m=\Omega(\poly (n))$), 
 the sign changes of the hidden neurons are sparse; see \cite[Lemma 5.4]{allen2018learning} and \cite[Lemma C.2]{arora2019fine} for details. The sparsity in sign changes suggests that both $\tilde{\bm{H}}^+(t)\approx \bm{H}^+(t)$ and $\tilde{\bm{H}}^-(t)\approx \bm{H}^-(t)$ are approximately PSD.

\begin{theorem}
\label{thm: matrix representation}
For any iteration $t \ge 0$ and any stepsize $\eta>0$, it is true that 
\begin{align*}
& \pth{\bm{I}-  \eta \pth{\tilde{\bm{H}}^+(t+1) + \bm{H}^-(t+1)}} \pth{\bm{y} - \bm{\hat{y}}(t)} \\
& \qquad \qquad \qquad \qquad \qquad \qquad \qquad \le \pth{\bm{y} - \bm{\hat{y}}(t+1)} \\
& \qquad \qquad \qquad \qquad\qquad \qquad \qquad \quad \le \pth{\bm{I}- \eta\pth{\bm{H}^+(t+1) + \tilde{\bm{H}}^-(t+1)}} \pth{\bm{y} - \bm{\hat{y}}(t)},
\end{align*}
where the inequalities are entry-wise.
\end{theorem}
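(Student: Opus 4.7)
The plan is to reduce the theorem to a pointwise sandwich for the ReLU difference and then assemble the matrix form by splitting the sum over hidden units into $\calA$ and $\calB$. Starting from the identity
\begin{equation*}
\hat y_i(t+1)-\hat y_i(t)=\frac{1}{\sqrt m}\sum_j a_j\bigl([\iprod{w_j^{t+1}}{x_i}]_+-[\iprod{w_j^t}{x_i}]_+\bigr),
\end{equation*}
the GD recursion \eqref{eq: weight update new} gives $\iprod{w_j^{t+1}-w_j^t}{x_i}=\frac{\eta a_j}{n\sqrt m}\sum_{i'}(y_{i'}-\hat y_{i'}(t))\iprod{x_{i'}}{x_i}\indc{\iprod{w_j^t}{x_{i'}}>0}$, so the two time indices $t$ and $t+1$ enter the bound only through the two indicators attached to each hidden unit.

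The first step is the deterministic lemma
\begin{equation*}
(u-v)\indc{v>0}\;\le\;[u]_+-[v]_+\;\le\;(u-v)\indc{u>0},\qquad u,v\in\reals,
\end{equation*}
which I would establish from the algebraic identities $[u]_+-[v]_+=(u-v)\indc{u>0}+v(\indc{u>0}-\indc{v>0})$ and $[u]_+-[v]_+=(u-v)\indc{v>0}+u(\indc{u>0}-\indc{v>0})$ together with a four-case check showing that the correction term $v(\indc{u>0}-\indc{v>0})\le 0$ while $u(\indc{u>0}-\indc{v>0})\ge 0$. Applying this with $u=\iprod{w_j^{t+1}}{x_i}$ and $v=\iprod{w_j^t}{x_i}$: on $\calA$ ($a_j=1$), the upper sandwich $(u-v)\indc{u>0}$ places $\indc{\iprod{w_j^{t+1}}{x_i}>0}$ beside the $\indc{\iprod{w_j^t}{x_{i'}}>0}$ already sitting inside $u-v$, reproducing $\tilde{\bm H}^+_{ii'}(t+1)$; on $\calB$ ($a_j=-1$) the external minus flips the direction of the inequality, forcing the lower sandwich $(u-v)\indc{v>0}$ and thus the second factor $\indc{\iprod{w_j^t}{x_i}>0}$, while the $a_j=-1$ inside $u-v$ cancels the external minus and leaves a positive-$\eta$ contribution matching $\bm H^-_{ii'}(t+1)$. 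Summing over $j$ and $i'$ yields $\hat y_i(t+1)-\hat y_i(t)\le\eta\bigl[\bigl(\tilde{\bm H}^+(t+1)+\bm H^-(t+1)\bigr)(\bm y-\bm{\hat y}(t))\bigr]_i$, which rearranges to the lower inequality of the theorem. The upper inequality follows by swapping which side of the sandwich is used on each set: apply $(u-v)\indc{v>0}$ on $\calA$ (yielding $\bm H^+$) and $(u-v)\indc{u>0}$ on $\calB$ (yielding $\tilde{\bm H}^-$).

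The main bookkeeping hurdle is tracking the two independent sign flips for $j\in\calB$: an $a_j=-1$ in front of the ReLU increment and a second $a_j=-1$ sitting inside $u-v$. Their product $+1$ is precisely what makes each entry of the bound positive in $\eta$ and what dictates which matrix of the pair $\{\bm H^\pm,\tilde{\bm H}^\pm\}$ appears on each side. A convenient feature of the argument is that the sandwich is applied termwise in $j$, so no sign hypothesis on the residuals $y_{i'}-\hat y_{i'}(t)$ is needed; the pointwise inequalities descend directly to the claimed entrywise matrix inequalities on $\bm y-\bm{\hat y}(t+1)$, and I do not anticipate any analytic obstacle beyond this sign bookkeeping.
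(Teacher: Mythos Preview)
Your proposal is correct and follows essentially the same approach as the paper: both establish the elementary ReLU sandwich $(u-v)\indc{v>0}\le[u]_+-[v]_+\le(u-v)\indc{u>0}$, apply it per hidden unit with $u=\iprod{w_j^{t+1}}{x_i}$ and $v=\iprod{w_j^t}{x_i}$, and then split the sum over $\calA$ and $\calB$ so that the sign $a_j$ selects which side of the sandwich contributes to each bound. The paper isolates the per-neuron inequality as a separate proposition whereas you argue inline, but the content is identical.
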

Theorem \ref{thm: matrix representation} says that when the sign changes are sparse, the dynamics of $ \pth{\bm{y} - \bm{\hat{y}}(t)}$ are governed by a sequence of 
PSD matrices. 
%
Similar observation is made in \cite{du2018gradient,arora2019fine}. 

\section{Main Results}
\label{sec: main results}
%
We first show (in Section \ref{subsec:asymptotic vacuous rates}) that the existing convergence rates that are derived based on minimum eigenvalues approach 0 as the sample size $n$ grows. 
Then, towards a non-diminishing convergence rate, we characterize (in  Section \ref{subsec: constant rate}) how the target function $f^*$ affects the convergence rate.

\subsection{Convergence rates based on minimum eigenvalues}
\label{subsec:asymptotic vacuous rates}
Let $\bm{H}: = \bm{H}^+(1) + \bm{H}^-(1)$.  
It has been shown in \cite{du2018gradient} that when the neural networks are sufficiently over-parameterized $m=\Omega(n^6)$,  
the convergence of $\norm{\bm{y} - \hat{\bm{y}}(t)}$ and the associated convergence rates with high probability can be upper bounded as \footnote{
Though a refined analysis of that in \cite{du2018gradient} is given by \cite[Theorem 4.1]{arora2019fine},  the analysis crucially relies on the convergence rate in \eqref{eq: proposed convergence rate}.
} 
\begin{align}
\label{eq: proposed convergence rate}
\nonumber
\norm{\bm{y} - \hat{\bm{y}}(t)} &\le \pth{1- \eta \lambda_{\min}(\bm{H}) }^{t}\norm{\bm{y}-\hat{\bm{y}}(0)}\\
& = \exp\pth{- t \log \frac{1}{1- \eta \lambda_{\min}(\bm{H})} }\norm{\bm{y}}, 
\end{align}
where $\lambda_{\min}(\bm{H})$ is the smallest eigenvalue of $\bm{H}$. Equality \eqref{eq: proposed convergence rate} holds because of $\hat{\bm{y}}(0) = \bm{0}$. 
In this paper, we refer to $\log \frac{1}{1- \eta \lambda_{\min}(\bm{H})} $ as {\em convergence rate}. 
The convergence rate here is quite appealing at first glance as 
it is {\em independent} of the target function $f^*$.  Essentially \eqref{eq: proposed convergence rate} says that no matter how the training data is generated, via GD, we can always find an over-parameterized neural network that  perfectly fits/memorizes all the training data tuples 
exponentially fast! 
Though the spectrum of the random matrix $\bm{H}$ can be proved to concentrate as $n$ grows, we observe that $\lambda_{\min}(\bm{H})$ converges to 0 as $n$ diverges, formally shown in Theorem \ref{thm: vacuous rate}. 
\begin{theorem}
\label{thm: vacuous rate}
For any data distribution $\rho$, 
there exists a sequence of non-negative real numbers $\lambda_1\ge \lambda_2\ge \dots$ (independent of $n$) satisfying $\lim_{i\diverge} \lambda_{i} =0$ 
such that, with probability $1-\delta$, 
\begin{equation}
\label{eq:eigen-concentration}
\sup_i|\lambda_i-\tilde\lambda_i|\le \sqrt{\frac{\log (4n^2/\delta)}{m}}+\sqrt{\frac{8 \log (4/\delta)}{n}}. 
\end{equation}
where $\tilde\lambda_1\ge \dots \ge \tilde\lambda_n$ are the spectrum of $\bm{H}$. 
In addition, if $m=\omega(\log n)$, we have 
\begin{equation}
\label{eq:vanish-lambda-min}
\lambda_{\min}(\bm{H})  \toprob  0,  ~~~~ \text{as } n\diverge,  
\end{equation}
where $\toprob$ denotes convergence in probability. 
\end{theorem}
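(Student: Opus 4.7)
}

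The plan is to identify an integral operator whose spectrum serves as the ``limit'' $\{\lambda_i\}$, and then bound $\sup_i|\lambda_i - \tilde\lambda_i|$ by a two-step triangle inequality that separates the finite-width ($m$) randomness from the finite-sample ($n$) randomness. Concretely, since $w^0_j \sim \calN(\zeros, \nu^2\identity)$ is rotationally symmetric, the kernel
\[
K(x,x') \;:=\; \iprod{x}{x'}\,\Prob_{w\sim \calN(\zeros,\nu^2\identity)}\{\iprod{w}{x}>0,\,\iprod{w}{x'}>0\}
\]
is bounded by $1$, symmetric, and positive semi-definite, so the integral operator $T_K:L^2(\calS^{d-1},\rho)\to L^2(\calS^{d-1},\rho)$, $(T_K f)(x)=\int K(x,y)f(y)\diff\rho(y)$, is compact and positive; let $\lambda_1\ge\lambda_2\ge\cdots\ge 0$ be its eigenvalues, which depend only on $\rho$ (hence on neither $n$ nor $m$) and satisfy $\lambda_i\to 0$.

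Define the (random) Gram matrix $\bm{K}^*\in\reals^{n\times n}$ with entries $\bm{K}^*_{ii'}=K(x_i,x_{i'})$, so that $\expect{\bm{H}\mid x_1,\dots,x_n}=\bm{K}^*/n$. I would bound
\[
\sup_i|\lambda_i-\tilde\lambda_i|
\;\le\;
\underbrace{\sup_i\bigl|\lambda_i(\bm{H})-\lambda_i(\bm{K}^*/n)\bigr|}_{\text{(A): randomness in }w}
\;+\;
\underbrace{\sup_i\bigl|\lambda_i(\bm{K}^*/n)-\lambda_i(T_K)\bigr|}_{\text{(B): randomness in }x}.
\]
For (A), conditional on $\{x_i\}$, each entry $\bm{H}_{ii'}$ is $\tfrac{1}{n}$ times an average of $m$ i.i.d.\ bounded terms (products of indicators scaled by $\iprod{x_i}{x_{i'}}\in[-1,1]$) with mean $\bm{K}^*_{ii'}/n$. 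Hoeffding's inequality plus a union bound over the $n^2$ entries yields, with probability $\ge 1-\delta/2$,
\[
\|\bm{H}-\bm{K}^*/n\|_F \;\le\; \sqrt{\tfrac{\log(2n^2/\delta)}{2m}},
\]
and Weyl's inequality upgrades this to the first term of \eqref{eq:eigen-concentration}.

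For (B), I would invoke the standard spectral concentration for empirical integral operators (Koltchinskii--Gin\'e, and Rosasco--Belkin--De~Vito). The trick is to embed both $T_K$ and $\bm{K}^*/n$ as self-adjoint Hilbert--Schmidt operators on a common space (e.g.\ the reproducing kernel Hilbert space of $K$, or via the Nystr\"om extension $\hat T_n f(x)=\tfrac{1}{n}\sum_i K(x,x_i)f(x_i)$), where $\hat T_n$ has the same non-zero spectrum as $\bm{K}^*/n$. The extension of the Hoffman--Wielandt inequality to compact operators then gives
\[
\sum_i\bigl(\lambda_i(\bm{K}^*/n)-\lambda_i(T_K)\bigr)^2 \;\le\; \|\hat T_n-T_K\|_{HS}^{2}.
\]
Because $K$ is bounded, $\|\hat T_n-T_K\|_{HS}$ is a function of i.i.d.\ samples with bounded differences of order $1/\sqrt{n}$, so McDiarmid's inequality plus the variance bound $\expect{\|\hat T_n-T_K\|_{HS}^2}=O(1/n)$ delivers $\text{(B)}\le\sqrt{8\log(4/\delta)/n}$ with probability $\ge 1-\delta/2$. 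Combining (A) and (B) by a union bound yields \eqref{eq:eigen-concentration}.

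Finally, \eqref{eq:vanish-lambda-min} follows by applying the just-established bound at any fixed $\delta$: when $m=\omega(\log n)$, both terms in \eqref{eq:eigen-concentration} vanish, so $\sup_i|\lambda_i-\tilde\lambda_i|\toprob 0$. Since $T_K$ is compact, $\lambda_n\to 0$ as $n\diverge$, and the inequality $0\le\tilde\lambda_n\le \lambda_n+\sup_i|\lambda_i-\tilde\lambda_i|$ (from applying the uniform bound at index $i=n$, with $\tilde\lambda_n=\lambda_{\min}(\bm{H})$) forces $\lambda_{\min}(\bm{H})\toprob 0$. The main obstacle is step (B): one must carefully set up the common-space embedding of the finite- and infinite-dimensional operators so that Hoffman--Wielandt applies, and verify the bounded-differences constant for McDiarmid; everything else reduces to Hoeffding, Weyl, and compactness of $T_K$.
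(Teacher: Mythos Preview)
Your proposal is correct and follows essentially the same route as the paper: the same triangle-inequality split $\bm{H}\to\bm{K}\to L_{\calK}$, Hoeffding plus a union bound and Weyl's inequality for step (A), and the Rosasco--Belkin--De~Vito spectral concentration for step (B), which the paper simply cites as \cite[Proposition~10]{rosasco2010learning} rather than sketching via Hoffman--Wielandt and McDiarmid as you do. The only cosmetic difference is in deducing \eqref{eq:vanish-lambda-min}: the paper sets $\delta=1/n$ directly, whereas you argue via a fixed-then-arbitrary~$\delta$; both are valid.
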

A numerical illustration of the decay of $\lambda_{\min}(\bm{H})$ in $n$ can be found in Fig.\,\ref{fig:eigen-min}.
%
Theorem \ref{thm: vacuous rate} is proved in Appendix \ref{app: thm: vacuous rate}. 
As a consequence of 
Theorem \ref{thm: vacuous rate}, the convergence rate in \eqref{eq: proposed convergence rate} approaches zero as $n\diverge$. 
%
 \begin{corollary}
 \label{col: corollary of diminising eigenvalues}
 For any $\eta = O(1)$, it is true that $\log \frac{1}{1-\eta \lambda_{\min}(\bm{H})} \to 0$ as $n\diverge$. 
 %
 \end{corollary}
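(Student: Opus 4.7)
The plan is to deduce the corollary as an immediate continuous-mapping consequence of Theorem~\ref{thm: vacuous rate}. Since $\eta = O(1)$, there is an absolute constant $C>0$ with $0<\eta\le C$ for all $n$, so the convergence $\lambda_{\min}(\bm{H})\toprob 0$ established in Theorem~\ref{thm: vacuous rate} gives $\eta\lambda_{\min}(\bm{H})\toprob 0$ as well (convergence in probability is preserved by multiplication by a bounded deterministic sequence).

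Next I would apply the continuous mapping theorem to the real function $g(x) := \log\frac{1}{1-x}$, which is continuous at $x=0$ with $g(0)=0$. One minor technicality is that $g$ is only defined for $x<1$; however, since $\eta\lambda_{\min}(\bm{H})\toprob 0$, for any fixed $\epsilon\in(0,1)$ we have $\prob{\eta\lambda_{\min}(\bm{H})\ge \epsilon}\to 0$, so with probability approaching $1$ the argument of $g$ lies safely in a compact subinterval of $[0,1)$ on which $g$ is continuous. The continuous mapping theorem then yields $g(\eta\lambda_{\min}(\bm{H}))\toprob g(0)=0$, which is precisely the claim that $\log\frac{1}{1-\eta\lambda_{\min}(\bm{H})}\to 0$ (in probability) as $n\diverge$.

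There is no substantive obstacle here: the heavy lifting has already been performed in Theorem~\ref{thm: vacuous rate}, and the corollary merely packages it with the observation that $x\mapsto \log\frac{1}{1-x}$ vanishes continuously at $x=0$. The only care required is the elementary domain restriction for the logarithm, which is automatic once $\eta\lambda_{\min}(\bm{H})\toprob 0$ is in hand.
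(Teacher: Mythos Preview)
Your proposal is correct and matches the paper's approach: the paper does not give an explicit proof of this corollary but simply presents it as an immediate consequence of Theorem~\ref{thm: vacuous rate}, and your continuous-mapping argument makes that deduction precise. The only implicit assumption you are inheriting from Theorem~\ref{thm: vacuous rate} is the condition $m=\omega(\log n)$ needed for $\lambda_{\min}(\bm{H})\toprob 0$, which the paper also leaves tacit in the corollary statement.
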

In Corollary \ref{col: corollary of diminising eigenvalues}, we restrict our attention to $\eta =O(1)$. 
This is because the general analysis of GD \cite{nesterov2018lectures} adopted by \cite{arora2019fine,du2018gradient}
requires that $(1-\eta \lambda_{\max}(\bm{H}))>0$, and 
by the spectrum concentration given in Theorem \ref{thm: vacuous rate}, the largest eigenvalue of $\bm{H}$ concentrates on some strictly positive value as $n$ diverges, i.e., $\lambda_{\max}(\bm{H}) = \Theta(1)$. 
Thus, if 
$\eta = \omega(1)$,  then $(1-\eta \lambda_{\max}(\bm{H}))< 0$ for any sufficiently large $n$, violating the condition assumed in  \cite{arora2019fine,du2018gradient}. 

Theorem \ref{thm: vacuous rate} essentially follows from two observations. Let $\bm{K} = \expect{\bm{H}}$, where the expectation is taken with respect to the randomness in the network initialization. It is easy to see that by standard concentration argument, for a given dataset, the spectrum of $\bm{K}$ and $\bm{H}$ are close with high probability. 
In addition,  the spectrum of $\bm{K}$, as $n$ increases, concentrates on the spectrum of the following integral operator $L_{\calK}$ on $L^2(\calS^{d-1}, \rho)$,
\begin{align}
\label{eq: int op}
(L_{\calK} f)(x) : = \int_{\calS^{d-1}} \calK(x, s) f(s) \diff \rho, 
\end{align}
with the kernel function: 
\begin{align}
\label{eq: initial kernel random}
\calK(x,s) : = \frac{ \iprod{x}{s}}{2\pi} \pth{\pi -\arccos \iprod{x}{s}} ~~~ \forall ~ x, s \in \calS^{d-1}, 
\end{align}
which is bounded over $\calS^{d-1}\times \calS^{d-1}$. 
In fact, $\lambda_1 \ge \lambda_2 \ge \cdots$ in Theorem \ref{thm: vacuous rate} are the eigenvalues of $L_{\calK}$. 
As $\sup_{x, s\in \calS^{d-1}} \calK(x, s) \le \frac{1}{2}$, it is true that $\lambda_i \le 1$ for all $i\ge 1$. 
Notably, by definition, $\bm{K}_{ii^{\prime}}=\expect{\bm{H}_{ii^{\prime}}} = \frac{1}{n}\calK(x_i,x_{i^{\prime}})$ is the 
empirical kernel matrix  on the feature vectors of the given dataset $\sth{(x_i, y_i): 1\le i \le n}$. 
A numerical illustration of the spectrum concentration of $\bm{K}$ is given in 
Fig.\,\ref{fig:eigen-concentrate}; see, also, \cite{xie2017diverse}.

Though a generalization bound is given in \cite[Theorem 5.1 and Corollary 5.2]{arora2019fine}, it is unclear how this bound scales in $n$. 
%
In fact, if we do not care about the structure of the target function $f^*$ and allow $\frac{\bm{y}}{\sqrt{n}}$ to be arbitrary, this generalization bound  might not decrease to zero as $n\diverge$. A detailed argument and a numerical illustration can be found in Appendix \ref{app: generalization bound}. 
\begin{figure}
     \centering
     \begin{subfigure}[t]{0.49\textwidth}
         \centering
         \includegraphics[width=\textwidth]{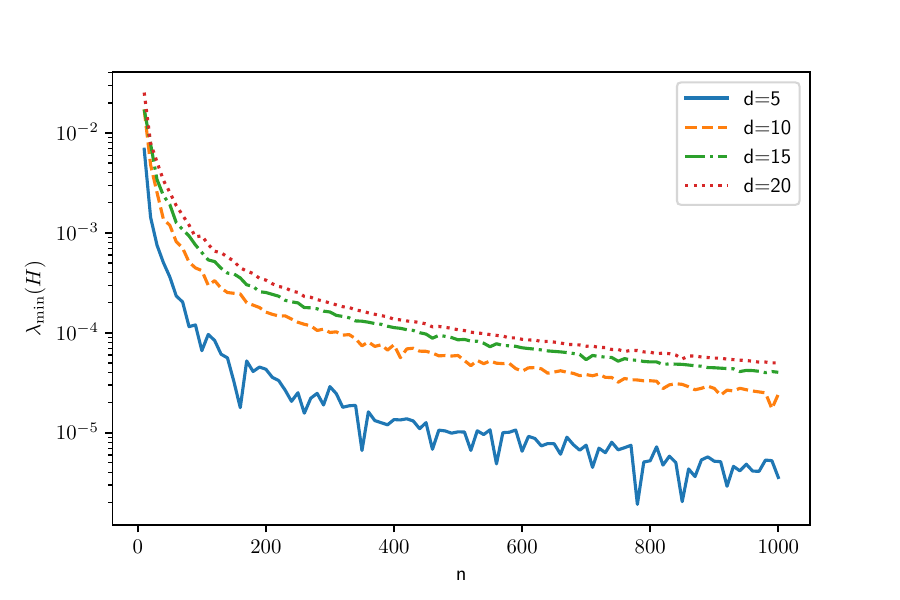}
          \caption{The minimum eigenvalues of one realization of $\bm{H}$ under different $n$ and $d$, with network width $m=2n$. }
           \label{fig:eigen-min}
          \end{subfigure}     
     \hfill    
     \begin{subfigure}[t]{0.49\textwidth}
         \centering
         \includegraphics[width=\textwidth]{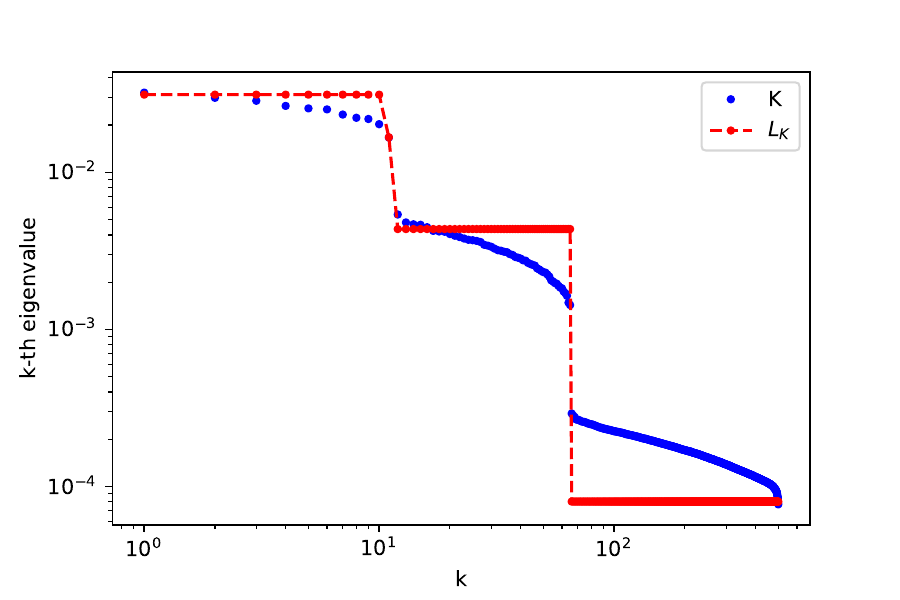}  
           \caption{The spectrum of $\bm{K}$ with $d=10,n=500$ concentrates around that of $L_{\calK}$. }
                 \label{fig:eigen-concentrate}
     \end{subfigure}
     \caption{The spectra of $\bm{H}$, $\bm{K}$, and $L_{\calK}$ when $\rho$ is the uniform distribution over $\calS^{d-1}$.}
\end{figure}

\subsection{Constant convergence rates}
\label{subsec: constant rate}
%
%

Recall that $f^*$ denotes the underlying function that generates output labels/responses (i.e., $y$'s) given input features (i.e., $x$'s).
%
For example, $f^*$ could be a constant function or a linear function. 
Clearly, the difficulty in learning $f^*$ via training neural networks should crucially depend on the properties of $f^*$ itself. 
We observe that the training convergence rate might be determined by how $f^*$ can be decomposed into the eigenspaces of the integral operator defined in \eqref{eq: int op}. 
This observation is also validated by a couple of existing empirical observations: (1) The spectrum of the MNIST data \cite{lecun1998gradient} concentrates on the first a few eigenspaces; and (2) the training is slowed down if labels are partially corrupted \cite{zhang2016understanding,arora2019fine}. 
Compared with \cite{arora2019fine}, we use spectral projection concentration to show how the random eigenvalues and the random projections in \cite[Eq.(8) in Theorem 4.1]{arora2019fine} are controlled by $f^*$ and $\rho$.

We first present a sufficient condition for the convergence of $\norm{\bm{y} -\hat{\bm{y}}(t)}$. 
\begin{theorem}[Sufficiency]
\label{thm: convergence rate under sufficient condition}
Let $0<\eta<1$. 
Suppose there exist $c_0\in (0,1)$ and $c_1>0$ such that 
\begin{align}
\label{eq: sufficiency}
\norm{\frac{1}{\sqrt{n}}\pth{\bm{I} - \eta\bm{K}}^{t} \bm{y} }\le (1-\eta c_0)^t   + c_1, ~~~ \forall ~ t.   
\end{align}
For any $\delta \in (0, \frac{1}{4})$ and given $T>0$, if 
\begin{align}
\label{eq: lower bound on overp}
m\ge ~ \frac{32}{c_1^2} \pth{\pth{\frac{1}{c_0} + 2\eta T c_1}^4 +4\log\frac{4n}{\delta}\pth{\frac{1}{c_0} + 2\eta T c_1}^2 },
\end{align}
then with probability at least $1-\delta$, the following holds for all $t \le T$:  
\begin{align}
\label{eq: real convergence}
\norm{\frac{1}{\sqrt{n}}\pth{\bm{y} -\hat{\bm{y}}(t)}} \le (1-\eta c_0)^t  + 2c_1. 
\end{align}
\end{theorem}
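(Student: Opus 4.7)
The plan is to induct on $t$ with the inductive hypothesis being \eqref{eq: real convergence} itself; the base case $t=0$ reduces to $\norm{\hat{\bm{y}}(0)}/\sqrt{n}\le c_1$, which follows from a Hoeffding bound over the $m$ hidden neurons once $m\ge (10/3)^2 \log(2n/\delta)$ (the second term in the max in \eqref{eq: lower bound on overp}). For the inductive step, the GD update produces, at each $t$, a random matrix $\bm{G}(t+1)$ sandwiched by the two matrices in Theorem~\ref{thm: matrix representation} such that $\bm{y}-\hat{\bm{y}}(t+1)=(\bm{I}-\eta\bm{G}(t+1))(\bm{y}-\hat{\bm{y}}(t))$. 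Writing $\bm{E}(s):=\bm{G}(s)-\bm{K}$ and unrolling the recursion gives
\[
\bm{y}-\hat{\bm{y}}(t) \;=\; (\bm{I}-\eta \bm{K})^{t}(\bm{y}-\hat{\bm{y}}(0))\;-\;\eta\sum_{s=0}^{t-1}(\bm{I}-\eta\bm{K})^{t-1-s}\bm{E}(s+1)(\bm{y}-\hat{\bm{y}}(s)).
\]
Assumption \eqref{eq: sufficiency} bounds the first term by $\sqrt{n}[(1-\eta c_0)^{t}+c_1]$, and $\norm{(\bm{I}-\eta\bm{K})^{t-1-s}}\le 1$ since $\bm{K}$ is PSD with $\lambda_{\max}(\bm{K})\le \Tr(\bm{K})\le 1/2<1/\eta$ (using $\sup\calK\le 1/2$). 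Closing the induction thus reduces to proving that the residual sum has norm at most $\sqrt{n}c_1$.

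Under the inductive hypothesis, $\eta\sum_{s=0}^{t-1}\norm{\bm{y}-\hat{\bm{y}}(s)}/\sqrt{n}\le 1/c_0+2\eta Tc_1=:R$, so it suffices to establish $\max_s\norm{\bm{E}(s+1)}\le c_1/R$. I would split $\bm{E}(s+1)=(\bm{G}(s+1)-\bm{H}(1))+(\bm{H}(1)-\bm{K})$. For the initialization piece, each entry of $\bm{H}(1)$ is an empirical average over the $m$ hidden neurons, so a Hoeffding inequality per entry, a union bound over the $n^2$ entries, and a Frobenius-to-spectral conversion give $\norm{\bm{H}(1)-\bm{K}}\lesssim \sqrt{\log(n/\delta)/m}$ with probability at least $1-\delta/2$; demanding this be at most $c_1/(2R)$ produces the first bracketed term $R^2\log(2n/\delta)/c_1^2$ in \eqref{eq: lower bound on overp}.

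The main obstacle is the trajectory piece $\bm{G}(s+1)-\bm{H}(1)$, which captures the net effect of neurons whose activation pattern at some $x_i$ has changed since initialization. From the GD update \eqref{eq: weight update new} together with the inductive hypothesis, the per-neuron weight movement satisfies $\norm{w_j^{s}-w_j^{0}}\le R/\sqrt{m}$ uniformly over $j$ and over $s\le T$, so a neuron $j$ can have flipped on $x_i$ only when $|\iprod{w_j^{0}}{x_i}|\le R/\sqrt{m}$, an event of probability at most $R/(\nu\sqrt{m})$ by Gaussian anti-concentration since $\iprod{w_j^{0}}{x_i}\sim\calN(0,\nu^2)$. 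A Chernoff bound uniform over $i$ shows that at most $O(R\sqrt{m}/\nu+\sqrt{m\log(n/\delta)})$ of the $m$ neurons are unstable at any single $x_i$; plugging this count into the entrywise formulas \eqref{eq: symmetric kernel at time t positive subnet}--\eqref{eq: asymmetric kernel at time t positive subnet} for $\bm{H}^{\pm}$ and $\tilde{\bm{H}}^{\pm}$ and converting once more from Frobenius to spectral norm yields $\norm{\bm{G}(s+1)-\bm{H}(1)}\lesssim R/(\nu\sqrt{m})$. Imposing this bound to be at most $c_1/(2R)$ produces the $R^4/(\nu^2 c_1^2)$ term in \eqref{eq: lower bound on overp}. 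The delicate point is that $R=1/c_0+2\eta Tc_1$ depends on the whole induction window $[0,T]$ rather than on the current step, so all the high-probability events must be set up uniformly at the outset before entering the induction; given this uniform setup, assembling the two perturbation bounds and closing the induction is then a routine calculation.
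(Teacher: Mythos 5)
Your proposal follows essentially the same route as the paper's proof in Appendix~\ref{app: thm: convergence rate under sufficient condition}: unroll the iteration around $(\bm{I}-\eta\bm{K})^{t}$, split the perturbation into an initialization-concentration piece (bounding $\norm{\bm{H}(1)-\bm{K}}$) and a sign-flip piece controlled by $\sum_i|\calF(x_i,t)|^2$, establish both bounds uniformly on a single event before entering the induction (the role of Lemma~\ref{lm: concentration of initialization} and the event $\calE$), and then close the induction. Two small slips are worth flagging. First, your base case is off: at $t=0$ the claim $\norm{\frac{1}{\sqrt{n}}(\bm{y}-\hat{\bm{y}}(0))}\le 1+2c_1$ follows trivially from \eqref{eq: sufficiency} with $t=0$; no bound on $\norm{\hat{\bm{y}}(0)}$ is needed, and the $(10/3)^2\log(2n/\delta)$ term in \eqref{eq: lower bound on overp} serves only to absorb the lower-order Bernstein term $\frac{10\log(2n/\delta)}{3m}$ in Lemma~\ref{lm: concentration of initialization}, not to establish the base case. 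Second, your assertion of an \emph{exact} recursion $\bm{y}-\hat{\bm{y}}(t+1)=(\bm{I}-\eta\bm{G}(t+1))(\bm{y}-\hat{\bm{y}}(t))$ with a single matrix $\bm{G}(t+1)$ does not follow from the entrywise inequalities of Theorem~\ref{thm: matrix representation} alone; it requires an additional interpolation argument, e.g.\ writing
\begin{align*}
\qth{\iprod{w_j^{t+1}}{x_i}}_+ - \qth{\iprod{w_j^{t}}{x_i}}_+
= \iprod{w_j^{t+1}-w_j^{t}}{x_i}\int_0^1 \indc{\iprod{w_j^{t}+s(w_j^{t+1}-w_j^{t})}{x_i}>0}\,\diff s
\end{align*}
and noting that the interpolated indicator lies between the two endpoint indicators. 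The paper sidesteps this by carrying the upper and lower bounds separately through $\bm{M}(t)$ and $\bm{L}(t)$ in Lemma~\ref{lm: sufficiency rewrite}. Once you supply that interpolation step, your reorganization is valid and arguably a bit cleaner, and the resulting perturbation bounds and $m$-requirements match the paper's up to constants.
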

Theorem \ref{thm: convergence rate under sufficient condition} is proved in Appendix \ref{app: thm: convergence rate under sufficient condition}. %
 Theorem \ref{thm: convergence rate under sufficient condition} says that if $\norm{\frac{1}{\sqrt{n}}\pth{\bm{I} - \eta\bm{K}}^{t} \bm{y} }$ converges to 
$c_1$ exponentially fast, then $\norm{\frac{1}{\sqrt{n}}\pth{\bm{y} -\hat{\bm{y}}(t)}}$ converges to $2c_1$ with the same convergence rate guarantee provided that the neural network is sufficiently parametrized. 
Recall that $y_i\in [-1,1]$ for each $i\in [n]$. Roughly speaking, in our setup, $y_i = \Theta(1)$ and $\norm{\bm{y}} = \sqrt{\sum_{i=1}^n y_i^2} = \Theta(\sqrt{n})$. Thus we have the $\frac{1}{\sqrt{n}}$ scaling in \eqref{eq: sufficiency} and \eqref{eq: lower bound on overp} for normalization purpose. 

Similar results were shown in \cite{du2018gradient,arora2019fine} with $\eta= \frac{\lambda_{\min}(\bm{K})}{n}$, $c_0= n\lambda_{\min}(\bm{K})$ and $c_1=0$. But the obtained convergence rate $\log \frac{1}{1-\lambda_{\min}^2(\bm{K})} \to 0$ as $n\diverge$. 
In contrast, as can be seen later (in Corollary \ref{thm: main theorem zero approximation error}), if $f^*$ lies in the span of a small number of eigenspaces of the integral operator in  \eqref{eq: int op}, then we can choose $\eta=\Theta(1)$, choose $c_0$ to be a value that is determined by the target function $f^*$ and the distribution $\rho$ only, and choose $c_1=\Theta(\frac{1}{\sqrt{n}})$. Thus, the resulting convergence rate $\log \frac{1}{1-\eta c_0}$ does not approach 0 as $n\diverge$.  The additive term $c_1 =\Theta(1/\sqrt{n})$ arises from the fact that only finitely many data tuples are available.  
Both the proof of Theorem \ref{thm: convergence rate under sufficient condition} and the proofs in \cite{du2018gradient,arora2019fine,allen2018learning} are based on the observation that when the network is sufficiently over-parameterized, the sign changes (activation pattern changes) of the hidden neurons are sparse. 
Different from \cite{du2018gradient,arora2019fine}, our proof does not use $\lambda_{\min}(\bm{K})$
; see Appendix \ref{app: thm: convergence rate under sufficient condition} for details.

%
%

It remains to show, with high probability, \eqref{eq: sufficiency}  in Theorem \ref{thm: convergence rate under sufficient condition} holds with properly chosen $c_0$ and $c_1$.   
By the spectral theorem \cite[Theorem 4, Chapter X.3]{dunford1963linear} and \cite{rosasco2010learning}, 
$L_{\calK}$ has a spectrum with {\em distinct} eigenvalues $\mu_1 > \mu_2 > \cdots$ \footnote{ 
The sequence of distinct eigenvalues can possibly be of finite length.
In addition, the sequences of $\mu_i$'s and $\lambda_i$'s (in Theorem \ref{thm: vacuous rate}) are different, the latter of which consists of repetitions. 
}
such that 
\[
L_{\calK} = \sum_{i\ge1} \mu_i P_{\mu_i}, ~~~ \text{with} ~ ~ P_{\mu_i} := \frac{1}{2\pi {\bf \im}} \int_{\Gamma_{\mu_i}} (\gamma \calI - L_{\calK})^{-1} \diff \gamma, 
\]
where $P_{\mu_i}: L^2(\calS^{d-1},\rho) \to L^2(\calS^{d-1},\rho)$ is the {\em orthogonal projection operator} onto the eigenspace associated with eigenvalue $\mu_i$;  
here (1) $\im$ is the imaginary unit, 
and (2) the integral can be taken over any closed simple rectifiable curve (with positive direction) $\Gamma_{\mu_i}$ containing $\mu_i$ only and no other distinct eigenvalue. 
In other words, $P_{\mu_i}f$ is the function obtained by projecting function $f$ onto the eigenspaces of the integral operator $L_{\calK}$ associated with $\mu_i$. 
%
 
Given an $\ell \in \naturals$, let $m_{\ell}$ be the sum of the multiplicities of the first $\ell$ nonzero top eigenvalues of $L_{\calK}$. That is, $m_1$ is the multiplicity of $\mu_1$ and $(m_2-m_1)$ is the multiplicity of $\mu_2$.  
By definition, 
\[
\lambda_{m_{\ell}} = \mu_{\ell}\not= \mu_{\ell+1} = \lambda_{m_{\ell}+1}, ~~ \forall \,  \ell. 
\]
%
\begin{theorem}
\label{thm: main theorem}
For any $\ell \ge 1$ such that $\mu_i>0, ~ \text{for } i=1, \cdots, \ell$, let 
\[
\epsilon(f^*, \ell) : = \sup_{x\in \calS^{d-1}}\abth{f^* (x)- (\sum_{1\le i\le \ell}P_{\mu_i} f^*)(x)} 
\]
be the approximation error of the span of the eigenspaces associated with the first $\ell$ {\em distinct} eigenvalues. 
Then given $\delta \in (0, \frac{1}{4})$ and $T>0$, if 
$n>\frac{256\log \frac{2}{\delta}}{(\lambda_{m_{\ell}} -  \lambda_{m_{\ell}+1})^2}$ 
and 
\[
m\ge ~ \frac{32}{c_1^2} \pth{\pth{\frac{1}{c_0} + 2\eta T c_1}^4 +4\log\frac{4n}{\delta}\pth{\frac{1}{c_0} + 2\eta T c_1}^2 },
\]
then with probability at least $1-3\delta$, the following holds for all $t \le T$:  
\begin{align*}
\norm{\frac{1}{\sqrt{n}}\pth{\bm{y} -\hat{\bm{y}}(t)}} \le \pth{1- \frac{3}{4}\eta\lambda_{m_\ell}}^t  + 
\frac{16\sqrt{2}  \sqrt{\log \frac{2}{\delta} }}{(\lambda_{m_\ell} -  \lambda_{{m_\ell}+1}) \sqrt{n}}+ 2\sqrt{2}\epsilon(\ell, f^*). 
\end{align*}
\end{theorem}
Since $\lambda_{m_\ell}$ is determined by $f^*$ and $\rho$ only, with $\eta=1$, 
the convergence rate $\log \frac{1}{1-\frac{3}{4}\lambda_{m_\ell}}$ is constant w.\,r.\,t.\,$n$. 

\begin{remark}[Early stopping]
\label{rmk: early stopping}
In Theorems \ref{thm: convergence rate under sufficient condition} and \ref{thm: main theorem}, the derived lower bounds of $m$ grow in $T$. To control $m$, we need to terminate the GD training at some ``reasonable'' $T$. 
Fortunately, $T$ is typically small. To see this, 
note that $\eta$, $c_0$, and $c_1$ are independent of $t$. By \eqref{eq: sufficiency} and \eqref{eq: real convergence} we know $\norm{\frac{1}{\sqrt{n}}\pth{\bm{y} -\hat{\bm{y}}(t)}}$ decreases to $\Theta(c_1)$ in $(\log \frac{1}{c_1}/\log \frac{1}{1-\eta c_0})$ iterations provided that 
$(\log \frac{1}{c_1}/\log \frac{1}{1-\eta c_0}) \le T$.  
Thus, to guarantee $\norm{\frac{1}{\sqrt{n}}\pth{\bm{y} -\hat{\bm{y}}(t)}} = O(c_1)$,  it is enough to terminate GD at iteration $T = \Theta(\log \frac{1}{c_1}/\log \frac{1}{1-\eta c_0})$.  
Similar to us, early stopping is adopted in \cite{allen2018learning,li2019gradient}, and is commonly adopted in practice. 
\end{remark}

\begin{corollary}[zero--approximation error]
\label{thm: main theorem zero approximation error}
Suppose there exists $\ell$ such that $\mu_i>0, ~ \text{for } i=1, \cdots, \ell$, and 
$\epsilon(f^*, \ell)= 0.$
Then let 
$\eta=1$ and $T= \frac{\log n}{-\log (1-\frac{3}{4}\lambda_{m_{\ell}})}$. 
For a given $\delta \in (0, \frac{1}{4})$, if 
$n> \frac{256\log \frac{2}{\delta}}{(\lambda_{m_{\ell}} -  \lambda_{m_{\ell}+1})^2}$ 
and 
$m\gtrsim \pth{n\log n} \pth{\frac{1}{\lambda_{m_{\ell}}^4} + \frac{\log^4 n \log^2\frac{1}{\delta}}{(\lambda_{m_{\ell}} -  \lambda_{m_{\ell}+1})^2 n^2 \lambda_{m_{\ell}}^4}}$, 
then with probability at least $1-3\delta$,  the following holds for all $t \le T$:
\begin{align*}
\norm{\frac{1}{\sqrt{n}}\pth{\bm{y} -\hat{\bm{y}}(t)}} \le (1-\frac{3\lambda_{m_{\ell}}}{4})^t  + \frac{16\sqrt{2\log 2/\delta}}{\sqrt{n}\pth{\lambda_{m_{\ell}} -  \lambda_{m_{\ell}+1}}}.
\end{align*}
\end{corollary}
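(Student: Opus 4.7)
The plan is to derive this corollary as a direct specialization of Theorem~\ref{thm: main theorem} under the parameter choices $\epsilon(f^*, \ell) = 0$, $\nu = 1/\sqrt{n}$, and $\eta = 1$. Under these substitutions I need only verify three things: (a) the resulting $c_0$ and $c_1$ match the error expression displayed in the corollary, (b) the stopping time $T = \log(n(\lambda_{m_\ell}-\lambda_{m_\ell+1}))$ is justified by the early-stopping discussion of Remark~\ref{rmk: early stopping}, and (c) the simpler over-parameterization bound stated in the corollary implies the more elaborate one required by Theorem~\ref{thm: main theorem}.

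For (a), plugging $\epsilon(f^*,\ell)=0$ and $\nu = n^{-1/2}$ into the formula for $c_1$ yields $c_1 = \frac{8\sqrt{2\log(2/\delta)}}{(\lambda_{m_\ell}-\lambda_{m_\ell+1})\sqrt{n}} + \frac{2\sqrt{2}}{\sqrt{n}}$. Since all eigenvalues of $L_\calK$ lie in $[0,1]$ we have $\lambda_{m_\ell}-\lambda_{m_\ell+1}\le 1$, and for $\delta\in(0,1/4)$ we also have $\sqrt{\log(2/\delta)}>1$, so the second summand is absorbed into the first, giving $2c_1 \le \frac{16\sqrt{2\log(2/\delta)}}{(\lambda_{m_\ell}-\lambda_{m_\ell+1})\sqrt{n}}$, which is exactly the additive term in the corollary. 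The value $c_0 = \tfrac{3}{4}\lambda_{m_\ell}$ is inherited verbatim, matching the geometric term $(1-\tfrac{3}{4}\lambda_{m_\ell})^t$. For (b), by Remark~\ref{rmk: early stopping} it suffices to take $T = \Theta(\log(1/c_1)/\log(1/(1-\eta c_0)))$; with $\eta=1$ and $c_0=\tfrac{3}{4}\lambda_{m_\ell}$ bounded away from $0$ (independent of $n$), the denominator is $\Theta(1)$, and $\log(1/c_1) = \Theta(\log(\sqrt{n}(\lambda_{m_\ell}-\lambda_{m_\ell+1}))) = \Theta(\log(n(\lambda_{m_\ell}-\lambda_{m_\ell+1})))$, matching the stated choice of $T$.

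For (c), I would substitute the above values into the $m$-bound of Theorem~\ref{thm: main theorem}. With $\nu^{-2}=n$, the dominant summand inside the bound is $\frac{128\,n}{c_1^2}\bigl(c_0^{-1}+2\eta T c_1\bigr)^4$. Using $c_1^{-2} \asymp (\lambda_{m_\ell}-\lambda_{m_\ell+1})^2\, n /\log(1/\delta)$ and $c_0^{-1}+2\eta T c_1 \asymp \lambda_{m_\ell}^{-1} + \log(n)/((\lambda_{m_\ell}-\lambda_{m_\ell+1})\sqrt{n})$, this simplifies to exactly the order stated in the corollary, $n^2(\lambda_{m_\ell}-\lambda_{m_\ell+1})^2\bigl(\lambda_{m_\ell}^{-1}+\log(n)/((\lambda_{m_\ell}-\lambda_{m_\ell+1})\sqrt{n})\bigr)^4$. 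I would then check that the other summands (the $\log(2n/\delta)(c_0^{-1}+2\eta T c_1)^2/c_1^2$ term and the $(10/3)^2\log(2n/\delta)$ term) are dominated by the quartic term, which is immediate since $n \gg \log n$.

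The main obstacle is purely the bookkeeping in step (c): one must carefully check that the $\nu^{-2}$ quartic term truly dominates, and that the resulting polynomial in $n$, $\lambda_{m_\ell}$, and $\lambda_{m_\ell+1}$ can be written in the compact form given in the corollary. Once this is done, Theorem~\ref{thm: main theorem} applies directly and yields, with probability at least $1-3\delta$, the stated bound $\|(\bm{y}-\hat{\bm{y}}(t))/\sqrt{n}\| \le (1-\tfrac{3}{4}\lambda_{m_\ell})^t + 2c_1$ for all $t\le T$, completing the proof.
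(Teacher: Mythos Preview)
Your proposal is correct and matches the paper's approach: the paper itself states only that the corollary ``follow[s] immediately from Theorem~\ref{thm: main theorem} by specifying the relevant parameters such as $\eta$, $T$, and $\nu$,'' and you have simply filled in that specialization in detail. The one nit is that absorbing the $2\sqrt{2}\nu = 2\sqrt{2}/\sqrt{n}$ contribution into the first term of $c_1$ does not literally preserve the constant $16$ (it gives something closer to $20$), but this is a cosmetic constant issue, not a gap in the argument.
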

Corollary \ref{thm: main theorem zero approximation error} says that for fixed $f^*$ and fixed distribution $\rho$, nearly-linear network over-parameterization $m=\Theta(n\log n)$ is enough for GD method to converge exponentially fast as long as $\frac{1}{\delta} = O(\poly(n))$. 
Corollary \ref{thm: main theorem zero approximation error} follow immediately from Theorem \ref{thm: main theorem} by specifying the relevant parameters such as $\eta$ and $T$. 
To the best of our knowledge, this is the first result showing sufficiency of nearly-linear network over-parameterization. 
%
Note that $(\lambda_{m_\ell} -  \lambda_{{m_\ell}+1})>0$ is the eigengap between the $\ell$--th and $(\ell+1)$--th largest distinct eigenvalues of the integral operator, and is irrelevant to $n$. 
 Thus, for fixed $f^*$ and $\rho$,  $c_1 = \Theta \pth{\sqrt{\log \frac{1}{\delta}/n }}$.

\section{Application to Uniform Distribution and Polynomials}
We illustrate our general results by applying them to the setting where the target functions are polynomials and the feature vectors are uniformly distributed on the sphere $\calS^{d-1}$. 

Up to now, we implicitly incorporate the bias $b_j$ in $w_j$ by augmenting the original $w_j$; correspondingly, the data feature vector is also augmented. In this section, as we are dealing with distribution on the original feature vector, we explicitly separate out the bias from $w_j$. In particular, let $b_j^0 \sim \calN(0, 1)$. 
For ease of exposition, with a little abuse of notation, we use $d$ to denote the dimension of the $w_j$ and $x$ before the above mentioned augmentation. 
With bias, \eqref{eq: two layer model} can be rewritten as 
$f_{\bm{W}, \bm{b}} (x) = \frac{1}{\sqrt{m}}\sum_{j=1}^m a_j\qth{\iprod{x}{w_j} + b_j}_+, $
where $\bm{b} = \pth{b_1, \cdots, b_m}$ are the bias of the hidden neurons, and the kernel function in \eqref{eq: initial kernel random} becomes  
\begin{align}
\label{eq: kernel with bias}
\calK(x,s)  =  \frac{\iprod{x}{s} +1}{2\pi} \pth{\pi -\arccos \pth{\frac{1}{2} \pth{\iprod{x}{s} +1}}} ~~~ \forall ~ x, s \in \calS^{d-1}. 
\end{align}

From Theorem \ref{thm: main theorem} we know the convergence rate is determined by the eigendecomposition of the target function $f^*$ w.\,r.\,t.\,the eigenspaces of $L_{\calK}$. 
When $\rho$ is the uniform distribution on $\calS^{d-1}$,  the eigenspaces of $L_{\calK}$ are the spaces of homogeneous harmonic polynomials, denoted by $\calH^{\ell}$ for $\ell \ge 0$. Specifically, $L_{\calK} =    \sum_{\ell\ge 0} \beta_{\ell} P_{\ell},$
where $P_{\ell}$ (for $\ell \ge 0$) is the orthogonal projector onto $\calH^{\ell}$ and $\beta_{\ell} =  \frac{\alpha_\ell \frac{d-2}{2}}{\ell + \frac{d-2}{2}}>0$ is the associated eigenvalue -- $\alpha_{\ell}$ is the coefficient of $\calK(x,s)$ in the expansion into Gegenbauer polynomials. 
Note that $\calH^{\ell}$ and $\calH^{\ell^{\prime}}$ are orthogonal when $\ell \not=\ell^{\prime}$. See appendix \ref{app: Harmonic analysis: gegenbauer} for relevant backgrounds on harmonic analysis on spheres.

Explicit expression of  eigenvalues $\beta_{\ell}>0$ is available;   
see Fig.\,\ref{fig:monotone} for an illustration of $\beta_\ell$. In fact, there is a line of work on efficient computation of the coefficients of Gegenbauer polynomials expansion \cite{cantero2012rapid}. 

If the target function $f^*$ is a standard polynomial of degree $\ell^*$, by \cite[Theorem 7.4]{yiwang2014}, we know $f^*$ can be perfectly projected onto the direct sum of the spaces of homogeneous harmonic polynomials up to degree $\ell^*$. The following corollary follows immediately from Corollary \ref{thm: main theorem zero approximation error}.  
\begin{corollary}
\label{cor: uniform}
Suppose $f^*$ is a degree $\ell^*$ polynomial, and the feature vector $x_i$'s are $\iid$ generated from the uniform distribution over $\calS^{d-1}$.  
%
Let 
$\eta=1$, and $T= \Theta(\log n)$. 
For a given $\delta \in (0, \frac{1}{4})$, if 
$n = \Theta\pth{\log \frac{1}{\delta}}$  
and $m = \Theta(n\log n \log^2 \frac{1}{\delta})$,
then with probability at least $1-\delta$,  the following holds for all $t \le T$:
\begin{align*}
\norm{\frac{1}{\sqrt{n}}\pth{\bm{y} -\hat{\bm{y}}(t)}} \le \pth{1-\frac{3c_0}{4}}^t  + \Theta(\sqrt{\frac{\log 1/\delta}{n}}),  ~~~~ \text{where }c_0 =\min \sth{\beta_{\ell^*}, \beta_{\ell^*+1}}. 
\end{align*}
\end{corollary}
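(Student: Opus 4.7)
The plan is to derive Corollary~\ref{cor: uniform} as a direct specialization of Corollary~\ref{thm: main theorem zero approximation error}; all the work lies in identifying the eigen-structure of $L_\calK$ under the uniform measure on $\calS^{d-1}$ and in checking that a degree-$\ell^*$ polynomial lies exactly in a finite direct sum of principal eigenspaces, so that the zero-approximation-error hypothesis $\epsilon(f^*,\ell)=0$ is met with a constant index $\ell$ independent of $n$.

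First I would invoke the harmonic analysis on the sphere. Because $\rho$ is rotationally invariant and the ReLU kernel $\calK(x,s)$ from \eqref{eq: kernel with bias} depends on $x,s$ only through $\iprod{x}{s}$, the operator $L_\calK$ commutes with the action of $O(d)$, so by the Funk--Hecke formula its eigenspaces are exactly the spaces $\calH^\ell$ of degree-$\ell$ spherical harmonics, with eigenvalues $\beta_\ell=\frac{\alpha_\ell(d-2)/2}{\ell+(d-2)/2}>0$ obtained from the Gegenbauer expansion of $\calK$; this decomposition is exactly the one summarized in the paragraph preceding the corollary. Next I would place $f^*$ in these eigenspaces: since the restriction to $\calS^{d-1}$ of a polynomial of degree $\ell^*$ decomposes as $f^*=\sum_{\ell=0}^{\ell^*}h_\ell$ with $h_\ell\in\calH^\ell$ (by \cite[Theorem 7.4]{yiwang2014}), choosing $\ell$ in the notation of Corollary~\ref{thm: main theorem zero approximation error} to be the number of distinct principal eigenvalues of $L_\calK$ that are needed to cover $\{\beta_0,\ldots,\beta_{\ell^*+1}\}$ gives $(\sum_{i\le \ell}P_{\mu_i}f^*)(x)=f^*(x)$ pointwise, so $\epsilon(f^*,\ell)=0$.

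Finally I would verify that, with $d$ and $\ell^*$ treated as constants, every quantity appearing in Corollary~\ref{thm: main theorem zero approximation error} is $\Theta(1)$ in $n$: $m_\ell=\sum_{\ell'\le \ell^*+1}\dim\calH^{\ell'}=O(1)$, and the principal eigenvalues $\lambda_{m_\ell}$ and $\lambda_{m_\ell+1}$ lie in $\{\beta_{\ell^*},\beta_{\ell^*+1}\}$, so the spectral gap $\lambda_{m_\ell}-\lambda_{m_\ell+1}=\Theta(1)$. Substituting into the sample-size and width conditions of Corollary~\ref{thm: main theorem zero approximation error} the requirement $n>\max\{256\log(2/\delta)/(\lambda_{m_\ell}-\lambda_{m_\ell+1})^2,m_\ell\}$ reduces to $n=\Theta(\log(1/\delta))$, and the bound $m\gtrsim(\lambda_{m_\ell}-\lambda_{m_\ell+1})^2 n^2\bigl(1/\lambda_{m_\ell}+\log n/((\lambda_{m_\ell}-\lambda_{m_\ell+1})\sqrt n)\bigr)^4$ simplifies to $m=\Theta(n^2\log^4 n)$. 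Plugging these specializations into the bound of Corollary~\ref{thm: main theorem zero approximation error} and using $(1-\tfrac{3\lambda_{m_\ell}}{4})^t\le(1-\tfrac{3\min\{\beta_{\ell^*},\beta_{\ell^*+1}\}}{4})^t$ gives the stated rate with $c_0=\min\{\beta_{\ell^*},\beta_{\ell^*+1}\}$. The main obstacle I anticipate is the eigenvalue bookkeeping: because the Gegenbauer coefficients need not be monotone in $\ell$ in every problem, one must be careful to pick $\ell$ large enough that the top $\ell$ distinct eigenspaces of $L_\calK$ contain $\bigoplus_{\ell'\le \ell^*}\calH^{\ell'}$; taking $c_0$ to be the min of the two eigenvalues bracketing the cut-off (as in the statement) is precisely what hedges against whichever ordering the monotonicity picture in Figure~\ref{fig:monotone} yields.
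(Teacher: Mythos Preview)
Your proposal is correct and follows the same route as the paper: the paper states that Corollary~\ref{cor: uniform} ``follows immediately from Corollary~\ref{thm: main theorem zero approximation error}'' after the harmonic-analysis decomposition $L_\calK=\sum_{\ell\ge 0}\beta_\ell P_\ell$ and the fact (via \cite[Theorem~7.4]{yiwang2014}) that a degree-$\ell^*$ polynomial lies in $\bigoplus_{\ell'\le \ell^*}\calH^{\ell'}$, which is exactly your plan. One small wrinkle: you only need the top eigenspaces to cover $\{\beta_0,\ldots,\beta_{\ell^*}\}$ (not $\beta_{\ell^*+1}$) for $\epsilon(f^*,\ell)=0$, and your assertion that $\lambda_{m_\ell},\lambda_{m_\ell+1}\in\{\beta_{\ell^*},\beta_{\ell^*+1}\}$ implicitly uses the numerical monotonicity of $\beta_\ell$ from Fig.~\ref{fig:monotone}; the paper likewise leans on this picture and leaves the analytic ordering open, so your hedge via $c_0=\min\{\beta_{\ell^*},\beta_{\ell^*+1}\}$ is in the same spirit.
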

For ease of exposition, in the above corollary, $\Theta(\cdot)$ hides dependence on quantities such as eigengaps -- as they do not depend on $n$, $m$, and $\delta$. 
Corollary \ref{cor: uniform} and $\beta_\ell$ in Fig.\,\ref{fig:monotone} together suggest that the convergence rate decays with both the dimension $d$ and the polynomial degree $\ell$. 
This is validated in Fig.\,\ref{fig:monotone}. 
It might be unfair to compare the absolute values of training errors since $f^*$ are different. Nevertheless, the convergence rates can be read from slope in logarithmic scale. We see that the convergence slows down as $d$ increases, and learning a quadratic function is slower than learning a linear function.
\begin{figure}
	\centering
	\begin{subfigure}[t]{0.48\textwidth}
		\centering
		\includegraphics[width=\textwidth]{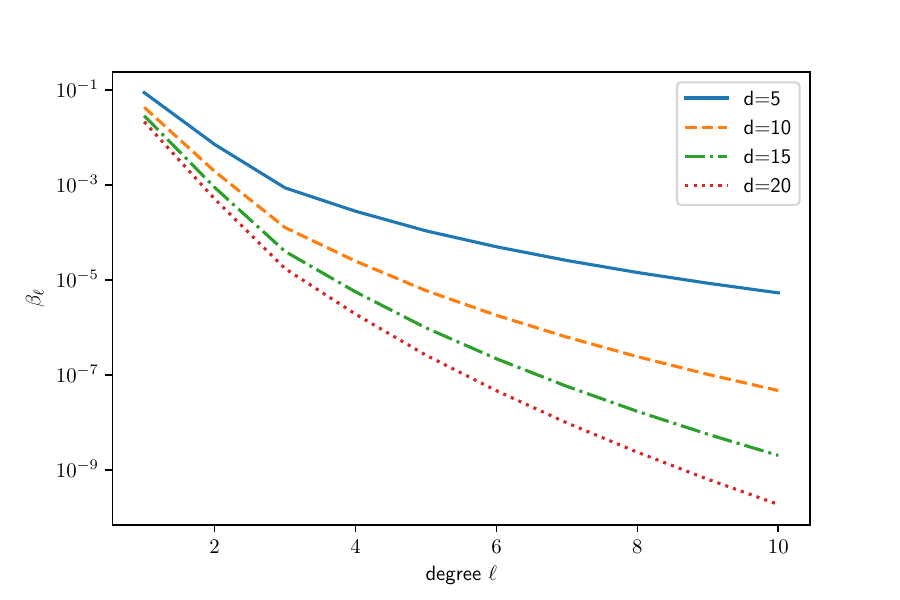}
		\caption{Plot of $\beta_{\ell}$ with $\ell$ under different $d$. Here, the $\beta_{\ell}$ is monotonically decreasing in $\ell$. 
		}
		\label{fig:monotone}
	\end{subfigure}
	\hfill 
	\begin{subfigure}[t]{0.48\textwidth}
		\centering
		\includegraphics[width=\textwidth]{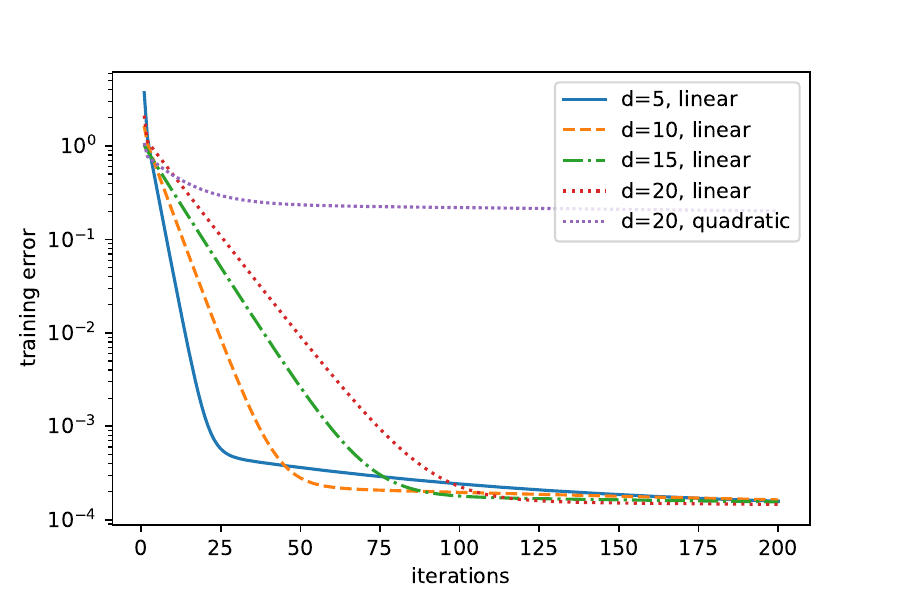}
		\caption{Training with $f^*$ being randomly generated linear or quadratic functions with $n=1000$, $m=2000$.\label{converge}} 
	\end{subfigure}
	\caption{Application to uniform distribution and polynomials.}
\end{figure}

Next we present the explicit expression of $\beta_{\ell}$. 
For ease of exposition, let $h(u) : = \calK(x,s)$ where $u=\iprod{x}{s}$. 
By \cite[Eq.\,(2.1) and Theorem 2]{cantero2012rapid}, we know 
\begin{align}
\label{eq: eigen compute 1}
\beta_{\ell} = \frac{d-2}{2} \sum_{k=0}^{\infty} \frac{h_{\ell+2k}}{2^{\ell+2k}k!\pth{\frac{d-2}{2}}_{\ell+k+1}},
\end{align}
where $h_{\ell} :=h^{(\ell)}(0)$ is the $\ell$--th order derivative of $h$ at zero, and the {\em Pochhammer symbol} $(a)_k$ is defined recursively as $(a)_0=1$, $(a)_k = (a+k-1)(a)_{k-1}$ for $k\in \naturals$. 
By a simple induction, it can be shown that $h_{0} = h^{(0)}(0) =1/3$, and for $k\ge 1$, 
\begin{align}
\label{eq: derivatives}
h_k = \frac{1}{2} \indc{k=1}  - \frac{1}{\pi 2^k} \pth{k \pth{\arccos 0.5}^{(k-1)} + 0.5 \pth{\arccos 0.5}^{(k)}}, 
\end{align}
where the computation of the higher-order derivative of $\arccos$ is standard.  It follows from \eqref{eq: eigen compute 1} and \eqref{eq: derivatives} that $\beta_{\ell} >0$, and 
$\beta_{2\ell} > \beta_{2(\ell +1)}$ and $\beta_{2\ell +1} > \beta_{2\ell +3}$ for all $\ell \ge 0$. However,  an analytic order among $\beta_\ell$ is unclear, and we would like to explore this in the future. 

\section*{Acknowledgement}
We would like to thank Yang Yuan (Tsinghua IIIS) for his insightful initial discussions,  and Rong Ge (Duke) for suggesting the network initialization rule.

\bibliographystyle{alpha}
\bibliography{2NNBib}

\newpage 

\begin{center}
\bf \Large Appendices
\end{center}

\appendix 
%
%

\section{Existing Generalization Bound}
\label{app: generalization bound}
Though a generalization bound is given in \cite[Theorem 5.1 and Corollary 5.2]{arora2019fine}, it is unclear how their bound scales in $n$. In particular, the dominating term of their bound is $\sqrt{\frac{2\bm{y}^{\top}\pth{n\bm{H}}^{-1} \bm{y}}{n}}$. Here, the matrix $\bm{H}$ is defined w.r.t. the training dataset $\sth{(x_i, y_i): 1\le i \le n}$ and the $\ell_2$ norm of the response vector $\bm{y}$ grows with $n$. As a result of this, the scaling of the magnitude of $\bm{y}^{\top}\pth{n\bm{H}}^{-1} \bm{y}$ in $n$ is unclear. Recall that $y_i = \Theta (1)$ for $i=1, \cdots, n$; thus, $\norm{\bm{y}} = \Theta(\sqrt{n})$. 
If we do not care about the structure of the target function $f^*$ and 
allow $\frac{\bm{y}}{\sqrt{n}}$ to be the eigenvector associated with the least eigenvalue of $\bm{H}$, then $\sqrt{\frac{2\bm{y}^{\top}\pth{n\bm{H}}^{-1} \bm{y}}{n}}$ might not decrease to zero as $n\diverge$. 
This is because %
\[
\sqrt{\frac{2\bm{y}^{\top}\pth{n\bm{H}}^{-1} \bm{y}}{n}}  = \sqrt{2\pth{\frac{\bm{y}}{\sqrt{n}}}^{\top}\pth{n\bm{H}}^{-1} \pth{\frac{\bm{y}}{\sqrt{n}}}} = \Theta\pth{ 
\pth{\lambda_{\min} \pth{n\bm{H}}}^{-\frac{1}{2}}}.
\]
 As illustrated by Fig.\,\ref{fig:eigen-min-inverse}, even when $\rho$ is the uniform distribution,  $\pth{\lambda_{\min} \pth{n\bm{H}}}^{-\frac{1}{2}}$ does not approach zero as $n$ increases. In general, without specifying the structure of the target function $f^*$, in the presence of the randomness of data generation and the network initialization, it is unclear which eigenvalues of $\bm{H}$ determines the generalization capability of the learned neural network.  

\begin{figure}[hb]
  \centering
    \includegraphics[width=0.6\textwidth]{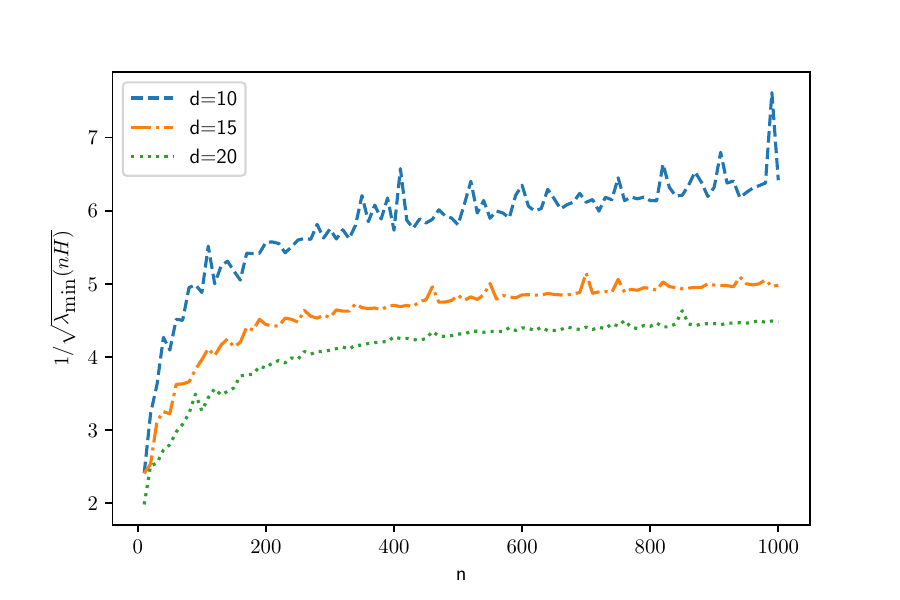}
    
      \caption{
               Plot of  $(\lambda_{\min}(n\bm{H}))^{-\frac{1}{2}}$ under different sample sizes. 
         Here the feature vectors are generated from the uniform distribution on the unit sphere. 
      }
      \label{fig:eigen-min-inverse}
\end{figure}

\section{Proof of Theorem  \ref{thm: matrix representation}}
\label{app: Proof of matrix representation}

We use the following proposition in proving Theorem  \ref{thm: matrix representation}. 

\begin{proposition}
\label{prop: single neuron bound}
It is true that for any 
$j\in [m]$, 
$i\in [n]$, and 
$t\ge 0$,  
\begin{align}
\nonumber
&\frac{\eta a_j}{n\sqrt{m}} \sum_{i^{\prime}=1}^n (y_{i^{\prime}}-\hat y_{i^{\prime}}(t)) \Iprod{x_i}{x_{i^{\prime}}}\indc{\iprod{ w_j^{t}}{x_{i^{\prime}}}  >0} \indc{\iprod{ w_j^{t}}{x_i}  >0}  ~~~ \label{eq: neuron lb}\\
& \qquad \qquad \qquad  \qquad \le \qth{\Iprod{w_{j}^{t+1}}{x_i}  } _+- \qth{\Iprod{w_{j}^{t}}{x_i} }_+\\
& \qquad \qquad \qquad \qquad  \le \frac{\eta a_j}{n\sqrt{m}} \sum_{i^{\prime}=1}^n (y_{i^{\prime}}-\hat y_{i^{\prime}}(t))\Iprod{x_i}{x_{i^{\prime}}}\indc{\iprod{ w_j^{t}}{x_{i^{\prime}}}  >0} \indc{\iprod{ w_j^{t+1}}{x_i} >0}. ~ \label{eq: neuron ub}
\end{align}
\end{proposition}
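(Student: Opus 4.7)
The plan is to reduce both inequalities to a one-dimensional fact about the ReLU. Taking the inner product of both sides of the GD update \eqref{eq: weight update new} with $x_i$ gives
\begin{equation*}
\iprod{w_j^{t+1}}{x_i} - \iprod{w_j^{t}}{x_i} \;=\; \frac{\eta a_j}{n\sqrt{m}} \sum_{i'=1}^n \pth{y_{i'} - \hat y_{i'}(t)} \iprod{x_{i'}}{x_i} \indc{\iprod{w_j^t}{x_{i'}} > 0},
\end{equation*}
which is precisely the sum that is multiplied by $\indc{\iprod{w_j^{t}}{x_i} > 0}$ on the left-hand side of \eqref{eq: neuron lb} and by $\indc{\iprod{w_j^{t+1}}{x_i} > 0}$ on the right-hand side of \eqref{eq: neuron ub}. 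Writing $u := \iprod{w_j^{t}}{x_i}$, $v := \iprod{w_j^{t+1}}{x_i}$, and $\Delta := v - u$, the proposition therefore reduces to the scalar inequality
\begin{equation*}
\Delta \cdot \indc{u > 0} \;\le\; [v]_+ - [u]_+ \;\le\; \Delta \cdot \indc{v > 0}.
\end{equation*}

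My preferred route is a convexity argument. The function $[\cdot]_+$ is convex on $\reals$ and $\indc{u > 0}$ lies in its subdifferential at $u$ (adopting the convention that the subdifferential at $0$ is the interval $[0,1]$, which contains $0$). The subgradient inequality at $u$ then yields $[v]_+ \ge [u]_+ + \indc{u > 0} \cdot \Delta$, which is the lower bound. Running the same inequality anchored at $v$ gives $[u]_+ \ge [v]_+ - \indc{v > 0} \cdot \Delta$, which rearranges to the upper bound. A fully mechanical backup is a four-way case analysis on the signs of $u$ and $v$: in the two ``no-flip'' cases ($u,v$ both positive or both non-positive), both bounds coincide with $[v]_+ - [u]_+$; in the two ``sign-flip'' cases, exactly one of the two bounds becomes slack by $|u|$ or $|v|$, absorbing the ReLU kink.

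The main obstacle, such as it is, is purely conceptual: one must realize that the asymmetry between \eqref{eq: neuron lb} and \eqref{eq: neuron ub} encodes the asymmetry between the one-sided subgradients of $[\cdot]_+$ evaluated at the old and new pre-activations, and that this is exactly what allows the bound to hold without any assumption on whether the neuron's activation pattern changes during the step. Once this is spotted, the argument is deterministic, pointwise in $(j,i,t)$, and requires no probabilistic input---cleanly separating the combinatorial role of activation flips from the concentration analysis that later feeds into \prettyref{thm: matrix representation}.
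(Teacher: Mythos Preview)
Your proposal is correct and follows essentially the same route as the paper: take the inner product of the update \eqref{eq: weight update new} with $x_i$ and then invoke the scalar inequality $\indc{a>0}(b-a)\le [b]_+-[a]_+\le \indc{b>0}(b-a)$. The only difference is that you justify this scalar fact via the subgradient inequality for the convex function $[\cdot]_+$, whereas the paper simply states it; this is a nice addition but not a departure in approach.
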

\begin{proof}
From \eqref{eq: weight update new}, 
we have 
\begin{align}
\label{eq: lm: single neuron}
\Iprod{w_j^{t+1}}{x_i}-\Iprod{w_j^t}{x_i} =\frac{\eta a_j}{n\sqrt{m}}\sum_{i^{\prime}=1}^n (y_{i^{\prime}}-\hat y_{i^{\prime}}(t)) \Iprod{x_i}{x_{i^{\prime}}}\indc{\Iprod{w_j^t}{x_{i^{\prime}}}>0}.
\end{align}
Then the conclusion follows from the fact that 
\begin{align}
\label{eq: 1-lip continuous}
\indc{a>0}(b-a)\le ~ \qth{b}_+-\qth{a}_+ ~ \le \indc{b>0}(b-a), ~~ \forall ~ a, b. 
\end{align}

\end{proof}

\begin{remark}
The inequality in \eqref{eq: 1-lip continuous} can be extended to 
a general family of activation function $\sigma$ if 
\[
\sigma'(a)(b-a)\le \sigma(b)-\sigma(a)\le \sigma'(b)(b-a),\quad \forall~a,b,
\]
where $\sigma^{\prime}(\cdot)$ is the derivative of $\sigma$. For ReLu activation, the right derivative is used. 
\end{remark}

\begin{proof}[\bf Proof of Theorem  \ref{thm: matrix representation}]
Recall from \eqref{eq: prediction at time t} that for $t\ge 0$, 
\begin{align*}
\hat{y}_i(t+1) &= \frac{1}{\sqrt{m}} \sum_{j=1}^m a_j\qth{\iprod{w_{j}^{t+1}}{x}}_+
 = \frac{1}{\sqrt{m}} \sum_{j\in \calA}  \qth{\iprod{w_{j}^{t+1}}{x}}_+ - \frac{1}{\sqrt{m}} \sum_{j\in \calB}\qth{\iprod{w_{j}^{t+1}}{x}}_+. 
\end{align*}
Thus,
\begin{align*}
\hat{y}_i(t+1) - \hat{y}_i(t)  & = \frac{1}{\sqrt{m}} \sum_{j\in \calA}  \pth{\qth{\iprod{w_{j}^{t+1}}{x}}_+ - \qth{\iprod{w_{j}^{t}}{x}}_+}  - \frac{1}{\sqrt{m}} \sum_{j\in \calB}\pth{\qth{\iprod{w_{j}^{t+1}}{x}}_+ - \qth{\iprod{w_{j}^{t}}{x}}_+}\\
& \overset{(a)}{\le} \frac{\eta }{nm} \sum_{j\in \calA} \sum_{i^{\prime}=1}^n (y_{i^{\prime}}-\hat y_{i^{\prime}}(t))\Iprod{x_i}{x_{i^{\prime}}}\indc{\iprod{ w_j^{t}}{x_{i^{\prime}}}  >0} \indc{\iprod{ w_j^{t+1}}{x_i} >0}\\
& \quad + \frac{\eta }{nm} \sum_{j\in \calB}\sum_{i^{\prime}=1}^n (y_{i^{\prime}}-\hat y_{i^{\prime}}(t))\Iprod{x_i}{x_{i^{\prime}}}\indc{\iprod{ w_j^{t}}{x_{i^{\prime}}}  >0} \indc{\iprod{ w_j^{t}}{x_i}  >0}\\
& =  \eta  \sum_{i^{\prime}=1}^n  \pth{\tilde{\bm{H}}^+_{i i^{\prime}}(t+1) +  \bm{H}^-_{i i^{\prime}}(t+1)} (y_{i^{\prime}}-\hat y_{i^{\prime}}(t)),
\end{align*}
where inequality (a) follows from Proposition \ref{prop: single neuron bound}. 
Thus, 
\begin{align*}
y_i - \hat{y}_i(t+1) \ge y_i - \hat{y}_i(t) - \eta \sum_{i^{\prime}=1}^n  \pth{\tilde{\bm{H}}^+_{i i^{\prime}}(t+1) +  \bm{H}^-_{i i^{\prime}}(t+1) } (y_{i^{\prime}}-\hat y_{i^{\prime}}(t)),
\end{align*}
whose matrix form is 
$\pth{\bm{y} - \bm{\hat{y}}(t+1)} \ge 
\pth{\bm{I}- \eta\pth{\tilde{\bm{H}}^+(t+1) + \bm{H}^-(t+1)}} \pth{\bm{y} - \bm{\hat{y}}(t)}, 
$
 proving the lower bound in Theorem  \ref{thm: matrix representation}.  
The upper bound in Theorem  \ref{thm: matrix representation} can be obtained analogously. 

\end{proof}

\section{Proof of Theorem \ref{thm: vacuous rate}}
\label{app: thm: vacuous rate}
Let $\lambda_1\ge \lambda_2\ge \dots$ be the spectrum of $L_\calK$ defined in \eqref{eq: int op}, whose existence is given by the spectral theorem \cite[Theorem 4, Chapter X.3]{dunford1963linear}. 
Recall that 
\[
\bm{H}_{ii^{\prime}} = \frac{1}{nm}\iprod{x_i}{x_{i^{\prime}}}\sum_{j=1}^m \indc{\iprod{w_j^0}{x_{i^{\prime}}}>0} \indc{\iprod{w_j^0}{x_{i}}>0}
\]
is a random $n\times n$ matrix, where the randomness comes from (1) the data randomness $\sth{(x_i, y_i): 1\le i\le n}$ and (2) the network initialization randomness. Thus, $\tilde{\lambda}_i$ for $1\le i \le n$ are random. Notably, $\bm{K}=\expect{\bm{H}}$ is still random as the data randomness remains. Denote 
the spectrum of $\bm{K}$ as $\hat{\lambda}_1 \ge \cdots \ge \hat{\lambda}_n$. By \cite[Proposition 10]{rosasco2010learning}, with probability at least $1-\frac{\delta}{2}$ over data generation, 
\begin{equation}
\label{eq:eigen-concentration-n}
\sup_i |\lambda_i - \hat{\lambda}_i| \le \sqrt{\frac{8 \log (4/\delta)}{n}}. 
\end{equation}
For a given dataset $x_1,\dots,x_n\in\calS^{d-1}$, by Hoeffding's inequality and the union bound, with probability at least $1-\frac{\delta}{2}$ over network initialization, 
\begin{equation}
	\label{eq:Hoeffding-H}
\Fnorm{\bm{H} - \bm{K}} = \Fnorm{\bm{H} - \Expect \bm{H}}  \le \sqrt{\frac{\log (4n^2/\delta)}{m}}. 	
\end{equation}
Then, it follows from Weyl's inequality that 
\begin{equation}
\label{eq:eigen-concentration-m}
\sup_i|\tilde\lambda_i - \hat{\lambda}_i|
\le  \sqrt{\frac{\log (4n^2/\delta)}{m}}.
\end{equation}
We conclude \eqref{eq:eigen-concentration} by combining \eqref{eq:eigen-concentration-n} and \eqref{eq:eigen-concentration-m}.
Letting $\delta=\frac{1}{n}$, we have, with probability $1-\frac{1}{n}$, 
\[
0\le \lambda_{\min}(\bm{H}) \le \lambda_n+\sqrt{\frac{\log (4n^3)}{m}}+\sqrt{\frac{8 \log (4 n)}{n}},
\]
where the right-hand side vanishes with $n$.
Thus, let $n\diverge$, we have $1-\frac{1}{n} \to 1$, and  
\begin{align*}
\lim_{n\diverge}\lambda_n+ \lim_{n\diverge} \sqrt{\frac{\log (4n^3)}{m}}+\lim_{n\diverge}\sqrt{\frac{8 \log (4 n)}{n}}
= 0,
\end{align*}
proving the theorem. 

\section{Proof of Theorem  \ref{thm: convergence rate under sufficient condition}}
\label{app: thm: convergence rate under sufficient condition} 
We prove Theorem  \ref{thm: convergence rate under sufficient condition} via two steps: 
(1) We first 
bound the perturbation terms. 
(2) Then, we prove Theorem  \ref{thm: convergence rate under sufficient condition} via an induction argument. 

\subsection{Bounding the perturbation}
For ease of exposition, let 
\begin{align}
\bm{H}(t) &: = \pth{\bm{H}^+(t) + \bm{H}^-(t)}, \label{eq: symmetric matrix} \\
\bm{M}(t) &: = 
\pth{\tilde{\bm{H}}^-(t)- \bm{H}^-(t)}, \label{eq: upper bound matrix}\\
\bm{L}(t) &: = \pth{\tilde{\bm{H}}^+(t)- \bm{H}^+(t)}. \label{eq: lower bound matrix}
\end{align}

\begin{lemma}
\label{lm: Bounding the perturbation 1}
Choose $0< \eta < 1$. Then for any $t\ge 0$, it holds that 
\begin{align*}
\norm{\bm{y} -\hat{\bm{y}}(t+1)} &\le \norm{\pth{\bm{I} -\eta \bm{K}}^{t+1}\bm{y}} 
 + \eta\sum_{r=2}^{t+2}  \norm{\pth{\bm{K} - \bm{H}(r-1)}} \norm{\pth{\bm{I} -\eta \bm{K}}^{r-2} \bm{y}}\\
& \quad + \eta \sum_{r=2}^{t+2} \pth{\norm{ \bm{M}(r-1)} +  \norm{ \bm{L}(r-1)}} \norm{\pth{\bm{y} -\hat{\bm{y}}(r-2)}}.  
\end{align*}
\end{lemma}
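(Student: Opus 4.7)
The plan is to take the two entrywise inequalities furnished by Lemma \ref{lm: sufficiency rewrite} and convert them into a single spectral-norm bound. The argument has two main ingredients: (i) each factor $\bm{I}-\eta\bm{H}(k)$ is a contraction in operator norm, so the backward product $\prod_{k=r}^{t+1}(\bm{I}-\eta\bm{H}(k))$ has operator norm at most one; and (ii) because the upper and lower bounds in Lemma \ref{lm: sufficiency rewrite} differ only through the $\bm{M}$ versus $\bm{L}$ corrections, these two corrections end up entering additively, producing the $\|\bm{M}(r-1)\|+\|\bm{L}(r-1)\|$ factor appearing in the statement.

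For the contraction property, I would first verify $\|\bm{I}-\eta\bm{H}(k)\|\le 1$ for every $k\ge 1$. By construction $\bm{H}(k)=\bm{H}^+(k)+\bm{H}^-(k)$ is positive semidefinite, and an elementary trace computation, using $\|x_i\|=1$ and that each indicator product is bounded by one, gives $\Tr(\bm{H}^{\pm}(k))\le |\calA|/m\le 1$, hence $\|\bm{H}(k)\|\le 2$. Since $0<\eta<1$, the spectrum of $\bm{I}-\eta\bm{H}(k)$ then lies in $[1-2\eta,1]\subseteq[-1,1]$, so the contraction bound (and therefore the one on the backward product) holds uniformly in $k$.

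To combine the two entrywise inequalities, denote by $A_{t+1}$, $B_{t+1}$, $C^{\bm{M}}_{t+1}$, $C^{\bm{L}}_{t+1}$ the four building blocks in Lemma \ref{lm: sufficiency rewrite} (so that the upper bound reads $A_{t+1}+B_{t+1}-C^{\bm{M}}_{t+1}$ and the lower bound reads $A_{t+1}+B_{t+1}-C^{\bm{L}}_{t+1}$). Setting $w:=A_{t+1}+B_{t+1}-(\bm{y}-\hat{\bm{y}}(t+1))$, the lemma yields $C^{\bm{L}}_{t+1}\le w\le C^{\bm{M}}_{t+1}$ entrywise, which forces $|w_i|\le|C^{\bm{L}}_{t+1,i}|+|C^{\bm{M}}_{t+1,i}|$ and hence
\[
|(\bm{y}-\hat{\bm{y}}(t+1))_i|\le |A_{t+1,i}|+|B_{t+1,i}|+|C^{\bm{L}}_{t+1,i}|+|C^{\bm{M}}_{t+1,i}|.
\]

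Finally, I would take the $\ell_2$ norm of the above entrywise inequality and apply Minkowski to obtain $\|\bm{y}-\hat{\bm{y}}(t+1)\|\le \|A_{t+1}\|+\|B_{t+1}\|+\|C^{\bm{L}}_{t+1}\|+\|C^{\bm{M}}_{t+1}\|$; the term $\|A_{t+1}\|$ is already in the stated form. For each of the other three norms, I would apply the triangle inequality inside the sum over $r$ together with submultiplicativity of the operator norm, absorbing the backward product via the contraction bound from the previous step. This produces exactly the three contributions in the conclusion, with $\|C^{\bm{L}}_{t+1}\|+\|C^{\bm{M}}_{t+1}\|$ yielding the $(\|\bm{M}(r-1)\|+\|\bm{L}(r-1)\|)$ structure. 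I do not anticipate any genuine obstacle beyond careful bookkeeping; the key observation making the additive form work is that the $\bm{M}$ and $\bm{L}$ corrections enter on opposite sides of the same additive decomposition and must therefore both be absorbed into the bound.
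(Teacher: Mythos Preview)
Your proposal is correct and follows essentially the same route as the paper's proof: establish $\|\bm{I}-\eta\bm{H}(k)\|\le 1$ (the paper does this via a Frobenius-norm bound $\|\bm{H}(k)\|_F\le 1$ rather than your trace argument), then absorb the backward products by submultiplicativity and bound each block with the triangle inequality. Your entrywise-to-norm step via the sandwich on $w$ is actually more explicit than the paper's terse ``combining''; two cosmetic slips---the sandwich direction is $C^{\bm{M}}_{t+1}\le w\le C^{\bm{L}}_{t+1}$, and your trace bound sharpens to $\Tr(\bm{H}(k))\le(|\calA|+|\calB|)/m=1$ so that $\|\bm{H}(k)\|\le 1$---do not affect the argument.
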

\begin{proof}
Let $\bm{\epsilon}(t+1) : = \pth{\bm{y} -\hat{\bm{y}}(t+1)} - \pth{\bm{I} - \eta \bm{H}(t+1)}\pth{\bm{y} -\hat{\bm{y}}(t)}$, for $t\ge 0$, i.e., 
\begin{align}
\label{eq: rewrite error}
\bm{y} -\hat{\bm{y}}(t+1)  = \pth{\bm{I} - \eta \bm{H}(t+1)}\pth{\bm{y} -\hat{\bm{y}}(t)} + \bm{\epsilon}(t+1),  ~~~ \forall \, t\ge 0. 
\end{align}
It follows from Theorem \ref{thm: matrix representation} that 
\begin{align}
\label{eq: error bound}
\norm{\bm{\epsilon}(t+1)}\le \eta(\norm{\bm{M}(t+1)}+\norm{\bm{L}(t+1)})\norm{\bm{y} -\hat{\bm{y}}(t)}.
\end{align}
Expanding \eqref{eq: rewrite error} over $t$, we have 
\begin{align}
\label{eq: original}
\bm{y} -\hat{\bm{y}}(t+1) &= \qth{\prod_{r=1}^{t+1} \pth{\bm{I} - \eta \bm{H}(r)}}\pth{\bm{y} -\hat{\bm{y}}(0)} +  \sum_{r=2}^{t+2} \qth{\prod_{k=r}^{t+1} \pth{\bm{I} - \eta \bm{H}(k)}} \bm{\epsilon}(r-1),
\end{align}
where $\prod_{r=k}^{t+1}\pth{\bm{I} -\eta\bm{H}(r)} : = \pth{\bm{I} -\eta\bm{H}(t+1)} \times \cdots \times \pth{\bm{I} -\eta\bm{H}(k)}$ for $k\le t+1$ is a  backward matrix product, and $\prod_{k=t+2}^{t+1}\pth{\bm{I} -\eta\bm{H}(k)} : = \bm{I}$. Recall that $\hat{\bm{y}}(0) = \bm{0}$. Eq.\,\eqref{eq: original} can be simplified as 
\begin{align*}
\bm{y} -\hat{\bm{y}}(t+1) &= \qth{\prod_{r=1}^{t+1} \pth{\bm{I} - \eta \bm{H}(r)}}\bm{y}  +  \sum_{r=2}^{t+2} \qth{\prod_{k=r}^{t+1} \pth{\bm{I} - \eta \bm{H}(k)}} \bm{\epsilon}(r-1). 
\end{align*}

In addition, it can be shown by a simple induction that 
\begin{align*}
\qth{\prod_{r=1}^{t+1}\pth{\bm{I} - \eta\bm{H}(r)}} = \pth{\bm{I} -\eta \bm{K}}^{t+1}  +  \eta\sum_{r=2}^{t+2} \qth{\prod_{k=r}^{t+1} \pth{\bm{I} - \eta \bm{H}(k)}} \pth{\bm{K} - \bm{H}(r-1)} \pth{\bm{I} -\eta \bm{K}}^{r-2}. 
\end{align*}
Thus, we have 
\begin{align*}
\bm{y} -\hat{\bm{y}}(t+1)& =  \pth{\bm{I} -\eta \bm{K}}^{t+1} \bm{y} 
 + \eta\sum_{r=2}^{t+2} \qth{\prod_{k=r}^{t+1} \pth{\bm{I} - \eta \bm{H}(k)}} \pth{\bm{K} - \bm{H}(r-1)} \pth{\bm{I} -\eta \bm{K}}^{r-2}\bm{y}   \\
& \quad + \sum_{r=2}^{t+2} \qth{\prod_{k=r}^{t+1} \pth{\bm{I} - \eta \bm{H}(k)}} \bm{\epsilon}(r-1). 
\end{align*}
Notably, $\norm{\bm{H}(k)}^2 \le \Fnorm{\bm{H}(k)}^2  \le 1$  for each $k\ge 1$. 
Choosing  $0< \eta <1$, we have  $\norm{\bm{I} - \eta  \bm{H}(k)} \le 1$ for $ k \ge 1$. 
With this fact and \eqref{eq: error bound}, we conclude Lemma \ref{lm: Bounding the perturbation 1}.

\end{proof}

\vskip \baselineskip 

For each $i\in [n]$ and $t\ge 0$, let 
\begin{align}
\label{eq: ever sign change}
\calF(x_i, t) : = \sth{j: \, \exists 0\le k \le t ~ s.\ t. \ \indc{\iprod{w_j^{k}}{x_i} >0} \not= \indc{\iprod{w_j^0}{x_i} >0}}. 
\end{align}
be the set of hidden neurons that have ever flipped their signs by iteration $t$.

\begin{lemma}
\label{lmm:bound-ML}
Choose $0< \eta <1$. The following holds for all $t\ge 0$: 
\[
\max\sth{\norm{\bm{M}(t)}+\norm{\bm{L}(t)},~ \norm{\bm{H}-\bm{H}(t)}}\le \sqrt{\frac{4}{m^2n}\sum_{i=1}^n  \abth{\calF(x_i, t)}^2}.
\]
\end{lemma}
%
%
%
\begin{proof}

We bound $\bm{M}(t)$ as 
\begin{align}
\label{eq: bound norm of M}
\nonumber
\norm{\bm{M}(t)}^2 &\le \Fnorm{\bm{M}(t)}^2 = \sum_{i=1}^n \sum_{i^{\prime}=1}^n \bm{M}_{ii^{\prime}}^2(t)\\
\nonumber
& \le \frac{1}{m^2n^2}  \sum_{i=1}^n \sum_{i^{\prime}=1}^n \pth{\iprod{x_i}{x_{i^{\prime}}}\sum_{j\in \calF(x_i, t) \cap \calA} \indc{\iprod{w_j^t}{x_{i^{\prime}}}+b_j^t} }^2\\
& \le \frac{1}{m^2n}\sum_{i=1}^n  \abth{\calF(x_i, t)}^2. 
\end{align}
Similarly, $\norm{\bm{L}(t)}^2 \le \frac{1}{m^2n}\sum_{i=1}^n  \abth{\calF(x_i, t)}^2$ and $\norm{\bm{H} - \bm{H}(t)}^2  \le \frac{4}{m^2 n} \sum_{i=1}^n \abth{\calF(x_i, t)}^2$. %
%
%
%
\end{proof}

\begin{lemma}
\label{lm: concentration of initialization}
Fix a dataset $\sth{(x_i, y_i): \, 1\le i\le n}$. 
For any $R>0$ and $\delta \in (0, \frac{1}{4})$, with probability at least $1-\delta$ over network initialization,  
\begin{align}
\label{eq: uniform concentration}
\norm{\bm{K} - \bm{H}} + \sqrt{\frac{4}{m^2n} \sum_{i=1}^n \pth{\sum_{j=1}^m \indc{\abth{\iprod{w_j^0}{x_i}} \le R}}^2} 
\le 
\frac{4R}{\sqrt{2\pi}} + 4\sqrt{ \frac{\log (4n/\delta) }{m}}.
\end{align}
\end{lemma}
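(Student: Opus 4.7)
The plan is to bound the two summands on the left-hand side separately by concentration inequalities and then take a union bound over events of probability at least $1-\delta/2$ each. For the spectral-norm term, I would write $\bm{H} - \bm{K} = \frac{1}{m}\sum_{j=1}^m (\bm{H}^{(j)} - \bm{K})$, where
\[
\bm{H}^{(j)}_{ii^{\prime}} := \tfrac{1}{n}\iprod{x_i}{x_{i^{\prime}}}\indc{\iprod{w_j^0}{x_i}>0}\indc{\iprod{w_j^0}{x_{i^{\prime}}}>0}
\]
is a random PSD Gram matrix with $\Tr(\bm{H}^{(j)})\le 1$ (hence $\norm{\bm{H}^{(j)}}\le 1$), and $\norm{\bm{K}}\le 1/2$ follows from $\sup_{x,s}\calK(x,s)\le 1/2$. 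Consequently $\norm{\bm{H}^{(j)} - \bm{K}}\le 3/2$. The matrices $\bm{H}^{(j)}$ are i.i.d.\ across $j$ since the dataset is fixed, so matrix Bernstein applies. The crucial variance estimate uses the PSD inequality $(\bm{H}^{(j)})^2 \preceq \bm{H}^{(j)}$ (which follows from $\bm{H}^{(j)}\preceq\bm{I}$) to obtain $\norm{\mathbb{E}[(\bm{H}^{(j)} - \bm{K})^2]} \le \norm{\bm{K}}\le 1/2$ independently of $n$. This yields, with probability at least $1-\delta/2$, $\norm{\bm{H}-\bm{K}} \le \sqrt{\log(2n/\delta)/m} + O(\log(2n/\delta)/m)$, accounting for the first and part of the third term on the right-hand side of the claim.

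For the ``nearly frozen'' term, fix $i\in[n]$ and let $S_i := \sum_{j=1}^m \indc{|\iprod{w_j^0}{x_i}|\le R}$. Since $w_j^0\sim\calN(\mathbf{0},\nu^2\bm{I})$ and $\norm{x_i}=1$, we have $\iprod{w_j^0}{x_i}\sim \calN(0,\nu^2)$, so bounding the Gaussian density by its peak value $1/(\nu\sqrt{2\pi})$ gives
\[
p_i := \prob{|\iprod{w_j^0}{x_i}|\le R} \;\le\; \tfrac{2R}{\nu\sqrt{2\pi}}.
\]
Each $S_i$ is a sum of $m$ i.i.d.\ Bernoulli$(p_i)$'s; scalar Bernstein followed by a union bound over $i\in[n]$ gives uniformly $S_i/m \le p_i + \sqrt{2p_i\log(2n/\delta)/m} + \tfrac{\log(2n/\delta)}{3m}$ with probability at least $1-\delta/2$. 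By AM--GM, $\sqrt{2p_i\log(2n/\delta)/m}\le p_i + \tfrac{\log(2n/\delta)}{2m}$, so $S_i/m \le 2p_i + O(\log(2n/\delta)/m)$ uniformly in $i$. The crude bound $\sqrt{(1/n)\sum_i S_i^2}\le \max_i S_i$ then turns the square-root expression on the left-hand side of the lemma into
\[
\tfrac{2}{m}\max_i S_i \;\le\; 4p_i + O(\log(2n/\delta)/m) \;\le\; \tfrac{8R}{\nu\sqrt{2\pi}} + O(\log(2n/\delta)/m),
\]
contributing the $8R/(\sqrt{2\pi}\nu)$ term and the rest of the $\tfrac{10}{3}\log(2n/\delta)/m$ term.

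A final union bound combines the two $\ge 1-\delta/2$ events. The main obstacle I expect is chasing the constants so that the explicit numerical factors $\tfrac{8}{\sqrt{2\pi}}$ and $\tfrac{10}{3}$ on the right-hand side line up exactly: this requires the sharp PSD variance bound $\norm{\mathbb{E}[(\bm{H}^{(j)})^2]}\le 1/2$ in matrix Bernstein (a naive Frobenius estimate would introduce an $n$-dependent factor and destroy the $n$-independent variance) together with careful AM--GM absorption in the scalar Bernstein bound on $S_i$. Conceptually, however, the argument is routine: both $\bm{H}-\bm{K}$ and each $S_i/m$ are averages of $m$ i.i.d.\ bounded random objects and concentrate around their means at the standard $1/\sqrt{m}$ rate.
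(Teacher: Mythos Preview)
Your proposal is correct. For the ``nearly frozen'' term $\sqrt{\frac{4}{m^2n}\sum_i S_i^2}$ your argument is essentially identical to the paper's: bound $p_i\le \frac{2R}{\sqrt{2\pi}\nu}$ by the Gaussian density peak, apply scalar Bernstein to each $S_i$, union bound over $i\in[n]$, and use $\sqrt{\frac{1}{n}\sum_i S_i^2}\le \max_i S_i$. The paper phrases the Bernstein conclusion as $S_i < \frac{4R}{\sqrt{2\pi}\nu}m + \frac{5}{3}\log\frac{2n}{\delta}$ directly (already absorbing the square-root term via $p_i \mapsto 2p_i$), after which multiplying by $2/m$ gives exactly $\frac{8R}{\sqrt{2\pi}\nu} + \frac{10\log(2n/\delta)}{3m}$, so the entire $\frac{10}{3}$ constant comes from this piece.

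The genuine difference is in the first term. The paper does \emph{not} use matrix Bernstein; it simply bounds $\norm{\bm{K}-\bm{H}}\le \Fnorm{\bm{K}-\bm{H}}$ and controls the Frobenius norm via scalar Hoeffding (entrywise, then summing), obtaining $\Fnorm{\bm{K}-\bm{H}}\le \sqrt{\log(2n/\delta)/m}$ with probability at least $1-\delta/2$. This is more elementary and contributes \emph{only} the $\sqrt{\log(2n/\delta)/m}$ term, with no residual $O(\log/m)$ piece. Your matrix-Bernstein route is sharper in principle (it targets the spectral norm directly and exploits the nice PSD variance bound $\mathbb{E}[(\bm{H}^{(j)})^2]\preceq \bm{K}$), but it re-introduces a $\log n$ dimensional factor and an additive $O(\log(2n/\delta)/m)$ term that you then have to fold into the $\frac{10}{3}$ budget—making the constant-chasing you flagged a real nuisance. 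The paper sidesteps that entirely by accepting the Frobenius-for-spectral loss, which is harmless here since the target right-hand side is already $\Theta(1/\sqrt{m})$.
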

\begin{proof}
Since  $w_j^0 \sim \calN(\bm{0}, \bm{I})$ and $x_i\in \calS^{d-1}$, it is true that  
$\iprod{w_j^0}{x_i}  \sim \calN(0,1)$. 
Thus,  $\expect{\indc{\abth{\iprod{w_j^0}{x_i} }  \le R}} = \prob{\abth{\iprod{w_j^0}{x_i}} \le R} 
< \frac{2R}{\sqrt{2\pi}}$ holds for any $R>0$. 
By Hoeffding's inequality and union bound, we have, 
with probability at least $1-\frac{\delta}{2}$, 
\begin{align}
\label{eq: concentration1}
\frac{1}{m^2n} \sum_{i=1}^n \pth{\sum_{j=1}^m  \indc{\abth{\iprod{w_j^0}{x_i} } \le \,R} }^2 
&\le \pth{\frac{2R}{\sqrt{2\pi}} + \sqrt{ \frac{\log (4n/\delta) }{m}}}^2. 
\end{align}
In addition, we have shown in \eqref{eq:Hoeffding-H} that with probability at least $1-\frac{\delta}{2}$, 
\begin{equation*}
	\norm{\bm{H} - \bm{K}}  \le \sqrt{\frac{\log (4n^2/\delta)}{m}}. 
\end{equation*}
From \eqref{eq: concentration1} and \eqref{eq:Hoeffding-H}, we conclude Lemma \ref{lm: concentration of initialization}. 
%
\end{proof}

\subsection{Finishing the proof of Theorem \ref{thm: convergence rate under sufficient condition}}
%

For any $\delta \in (0, \frac{1}{4})$, let $\calE$ be the event on which \eqref{eq: uniform concentration} holds for $R=\frac{1}{\sqrt{m}}\pth{\frac{1}{c_0} + 2\eta T c_1}$. By Lemma \ref{lm: concentration of initialization}, we know $\prob{\calE} \ge 1-\delta$. 


Conditioning on event $\calE$ occurs, we finish proving Theorem \ref{thm: convergence rate under sufficient condition} via induction. 
Since we assume $\calE$ has occurred, all the relevant quantities below are deterministic. 
The base case $t=0$ trivially holds. Suppose \eqref{eq: real convergence} is true up to $t\le T-1$, and it suffices to prove it for $t+1$.
By 
\eqref{eq: sufficiency},  
we have 
\begin{equation}
\label{eq:sum1}
\eta\sum_{r=0}^{t} \norm{\frac{1}{\sqrt{n}}\pth{\bm{I} -\eta \bm{K}}^{r}\bm{y} } \le \eta\sum_{r=0}^{t} \pth{\pth{1- \eta c_0}^r +c_1 } \le  \frac{1}{c_0} + \eta T c_1.
\end{equation}
By the induction hypothesis, we have
\begin{equation}
\label{eq:sum2}
\eta\sum_{r=0}^{t}   \norm{\frac{1}{\sqrt{n}}\pth{\bm{y} -\hat{\bm{y}}(r)}} \le 
\eta \sum_{r=0}^{t} (\pth{1- \eta c_0}^r +2c_1)\le \frac{1}{c_0} + 2\eta T c_1.
\end{equation}
Also, we have $\abth{\calF(x_i, r) } \le \abth{\calF(x_i, t+1)}$ for each $r \le t+1$ by monotonicity.
Then, applying the upper bounds in Lemma \ref{lmm:bound-ML} and \eqref{eq:sum1} -- \eqref{eq:sum2} into Lemma \ref{lm: Bounding the perturbation 1}, we obtain that,
\begin{align}
\norm{\frac{1}{\sqrt{n}}\pth{\bm{y} -\hat{\bm{y}}(t+1)}} 
& \le \pth{\pth{1- \eta c_0}^{t+1} +c_1 }\nonumber\\
& \quad + \pth{\norm{\bm{K} - \bm{H}} + \sqrt{\frac{4}{m^2 n} \sum_{i=1}^n \abth{\calF(x_i, t+1)}^2}} \pth{\frac{2}{c_0}+3\eta Tc_1}.\label{eq:induction-next} 
\end{align}
It remains to bound the cardinality of $\calF(x_i, t+1)$. Note that
\[
\calF(x_i, t+1)\subseteq \sth{j:\abth{\iprod{w_j^0}{x_i} } \le \, \max_{1\le k \le t+1} \norm{w_j^k -w_j^0}}.
\]
Since $\norm{x_i}=1$, it follows from \eqref{eq: weight update new} that 
\begin{align*}
\norm{w_j^{k} - w_j^{k-1}} 
\le  \frac{\eta}{n\sqrt{m}} \sum_{i=1}^n |y_i - \hat{y}_i(k-1)|
\le \frac{\eta}{\sqrt{nm}} \norm{ \bm{y} - \hat{\bm{y}}(k-1)}.
\end{align*}
Then, by \eqref{eq:sum2}, we have
\[
\max_{1\le k \le t+1} \norm{w_j^k -w_j^0}
\le \sum_{k=1}^{t+1}  \norm{w_j^{k} -w_j^{k-1} }
\le \frac{1}{\sqrt{m}}\pth{\frac{1}{c_0} + 2\eta T c_1}.
\]
Thus, by Lemma \ref{lm: concentration of initialization}, it holds that 
\begin{equation}
\label{eq:bound-F}
\norm{\bm{K} - \bm{H}} + \sqrt{\frac{4}{m^2n} \sum_{i=1}^n \pth{\sum_{j=1}^m \indc{\abth{\iprod{w_j^0}{x_i}} \le R}}^2} 
\le  \frac{4}{\sqrt{2\pi}\sqrt{m}}\pth{\frac{1}{c_0} + 2\eta T c_1} + 4\sqrt{ \frac{\log (4n/\delta) }{m}}.
\end{equation}
Substituting 
\eqref{eq:bound-F} into \eqref{eq:induction-next}, we finish the induction.

\section{Proof of Theorem \ref{thm: main theorem}}
\label{app: thm: main theorem}
We prove  \eqref{eq: sufficiency}  through exploring the structure of $f^*$ and using the concentration of spectral projection. In a sense, $\frac{1}{\sqrt{n}}\pth{\bm{I} - \eta\bm{K}}^{t} \bm{y}$ approximates $\pth{\calI - \eta L_{\calK}}^t f^*$ w.\,r.\,t.\,some properly chosen norm. Here $\calI$ is the identity operator, i.e., $\calI f = f$ for each $f\in L^2(\calS^{d-1}, \rho)$. 
Theorem \ref{thm: main theorem} follows immediately from Theorem \ref{thm: convergence rate under sufficient condition} and the following lemma. 
\begin{lemma}
\label{lm: empirical projection}
For any $\ell \ge 1$ such that $\mu_i>0, ~ \text{for } i=1, \cdots, \ell$, let 
\[ 
\epsilon(f^*, \ell) : = \sup_{x\in \calS^{d-1}}\abth{f^* (x)- (\sum_{1\le i\le \ell}P_{\mu_i} f^*)(x)}.
\]  
Then given $\delta \in (0, \frac{1}{4})$, if $n> \frac{256\log \frac{2}{\delta}}{(\lambda_{m_{\ell}} -  \lambda_{m_{\ell}+1})^2}$, 
with probability at least $1-2\delta$ it holds that 
\begin{align*}
\norm{\frac{1}{\sqrt{n}}\pth{\bm{I} - \eta\bm{K}}^{t}\bm{y}} 
&\le   \pth{1-\frac{3}{4}\eta\lambda_{m_{\ell}}}^{t}  + \frac{8\sqrt{2}  \sqrt{\log \frac{2}{\delta}} }{(\lambda_{m_\ell} -  \lambda_{{m_\ell}+1}) \sqrt{n}}+ \sqrt{2}\epsilon(\ell, f^*).   
\end{align*}
\end{lemma}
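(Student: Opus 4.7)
The plan is to verify the sufficient condition \eqref{eq: sufficiency} of Theorem \ref{thm: convergence rate under sufficient condition} with the claimed $c_0 = \frac{3}{4}\lambda_{m_\ell}$ and $c_1$. I would split $\bm{y} - \hat{\bm{y}}(0)$ and treat the two pieces separately. Since $\bm{K}_{ii'} = \frac{1}{n}\calK(x_i, x_{i'})$ with $\sup\calK\le \frac{1}{2}$, one has $\|\bm{K}\|\le \frac{1}{2}$, so for $0<\eta<1$ the matrix $\bm{I}-\eta\bm{K}$ is a contraction. Hence $\|(\bm{I}-\eta\bm{K})^t \hat{\bm{y}}(0)/\sqrt{n}\| \le \|\hat{\bm{y}}(0)/\sqrt{n}\|$, and a standard sub-Gaussian concentration on the independent Gaussians $\iprod{w_j^0}{x_i}\sim \calN(0,\nu^2)$ shows the right-hand side is $O(\nu)$ with probability at least $1-\delta$, producing the $2\sqrt{2}\nu$ contribution.

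For the response part, I would decompose the target as $f^* = f^*_\ell + r$, where $f^*_\ell := \sum_{i\le\ell}P_{\mu_i} f^*$ lies in the direct sum of the top $\ell$ eigenspaces of $L_\calK$ and $\|r\|_\infty \le \epsilon(f^*,\ell)$, so that $\bm{y} = \bm{y}_\ell + \bm{r}$ with $\|\bm{r}/\sqrt{n}\| \le \epsilon(f^*,\ell)$. Letting $\bm{P}$ be the orthogonal projector onto the span of the top $m_\ell$ eigenvectors of $\bm{K}$, I write
\[
(\bm{I}-\eta\bm{K})^t\frac{\bm{y}}{\sqrt{n}} = (\bm{I}-\eta\bm{K})^t\bm{P}\frac{\bm{y}}{\sqrt{n}} + (\bm{I}-\eta\bm{K})^t(\bm{I}-\bm{P})\frac{\bm{y}}{\sqrt{n}}.
\]
On the range of $\bm{P}$ every eigenvalue of $\bm{K}$ is at least $\tilde\lambda_{m_\ell}$, so the first summand has norm at most $(1-\eta\tilde\lambda_{m_\ell})^t \|\bm{y}/\sqrt{n}\| \le (1-\eta\tilde\lambda_{m_\ell})^t$ (using $|y_i|\le 1$). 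The spectral concentration already established in Theorem \ref{thm: vacuous rate} gives $|\tilde\lambda_{m_\ell} - \lambda_{m_\ell}| \le \sqrt{8\log(4/\delta)/n}$ with probability at least $1-\delta$, which under the hypothesis $n > 256\log(2/\delta)/(\lambda_{m_\ell}-\lambda_{m_\ell+1})^2$ implies $\tilde\lambda_{m_\ell} \ge \frac{3}{4}\lambda_{m_\ell}$, yielding the $(1-\eta c_0)^t$ term.

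The main obstacle is the residual $\|(\bm{I}-\bm{P})\bm{y}/\sqrt{n}\|$. By the triangle inequality and contractivity of $\bm{I}-\bm{P}$, this is bounded by $\|(\bm{I}-\bm{P})\bm{y}_\ell/\sqrt{n}\| + \epsilon(f^*,\ell)$, so everything reduces to showing that the sampled vector $\bm{y}_\ell/\sqrt{n}$, which is the image of $f^*_\ell$ under the sampling operator $R_n: f \mapsto (f(x_i)/\sqrt{n})_i$, lies almost inside the top-$m_\ell$ eigenspace of $\bm{K} = R_n R_n^*$. My plan is to compare the empirical projector $\bm{P}$ to the population projector $P_\ell := \sum_{i\le\ell} P_{\mu_i}$ via the isospectrality of $R_n R_n^*$ and $R_n^* R_n$ away from zero, combined with the Hilbert--Schmidt concentration of $R_n^* R_n$ around $L_\calK$ (as in \cite[Sec.~4]{rosasco2010learning}). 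A Davis--Kahan-type perturbation inequality, instantiated with the eigengap $\lambda_{m_\ell}-\lambda_{m_\ell+1}$ supplied by the hypothesis on $n$, then delivers $\|(\bm{I}-\bm{P})\bm{y}_\ell/\sqrt{n}\| \lesssim \sqrt{\log(2/\delta)}/((\lambda_{m_\ell}-\lambda_{m_\ell+1})\sqrt{n})$, producing the remaining term in $c_1$. Assembling the three estimates by the triangle inequality, with the $\sqrt{2}$ factors arising from converting between an orthogonal $\bm{P}$-versus-$(\bm{I}-\bm{P})$ decomposition and a plain sum of norms, gives the claimed bound on an event of probability at least $1-2\delta$.
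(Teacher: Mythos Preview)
Your proposal is correct and mirrors the paper's proof: the paper also separates $\bm y$ from $\hat{\bm y}(0)$ (bounding the latter via concentration and the contractivity of $\bm I-\eta\bm K$), expands $\bm y/\sqrt{n}$ in the eigenbasis of $\bm K$, splits at index $m_\ell$, and controls the tail piece by writing $f^*=f^*_\ell+r$ and invoking the RKHS spectral-projector perturbation bound of \cite{rosasco2010learning} (precisely your ``Davis--Kahan via isospectrality of $R_nR_n^*$ and $R_n^*R_n$'' step) together with eigenvalue concentration to obtain $\hat\lambda_{m_\ell}\ge \tfrac34\lambda_{m_\ell}$. One notational slip: in the paper the eigenvalues of $\bm K$ are $\hat\lambda_i$, not $\tilde\lambda_i$, and the relevant concentration is \eqref{eq:eigen-concentration-n}, which carries no $m$-dependence.
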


\begin{proof}
Since $\bm{K}$ is symmetric, 
we have $\bm{K}=\sum_{i=1}^n \hat\lambda_i \hat u_i \hat u_i^\top$, where $\hat\lambda_i$s 
are in an non-increasing order, $0\le \hat\lambda_i\le 1$, and $\norm{\hat u_i}=1$. 
For each $i$, define a function $\hat{\phi}_i$ over $\sth{x_k:  k \in [n] }$ by $\hat{\phi}_i(x_k)=\sqrt{n}\hat u_i(k)$ for $k\in [n]$.
Let $\rho(n)$ be the empirical distribution of $\sth{x_k: k \in [n] }$. Define $\iprod{\cdot}{\cdot}_{\rho(n)}$ as 
\begin{align}
\label{eq: empirical inner product}
\iprod{f}{g}_{\rho(n)}: = \frac{1}{n} \sum_{k=1}^n f(x_k) g(x_k). 
\end{align}
Notably, $\iprod{\cdot}{\cdot}_{\rho(n)}$ is similar to that of $\iprod{\cdot}{\cdot}_{\rho}$ but with a different measure.  
By definition $\sth{\hat{\phi}_i: 1\le i \le n}$ is a set of $n$ orthonormal functions w.r.t.\,the inner product $\iprod{\cdot}{\cdot}_{\rho(n)}$. 
It holds that 
\[
\frac{1}{\sqrt{n}}\pth{\bm{I} - \eta\bm{K}}^{t} \bm{y}
=\frac{1}{\sqrt{n}}\sum_{i=1}^n (1-\eta\hat\lambda_i)^t(\hat u_i^\top \bm{y})\hat u_i
= \sum_{i=1}^n (1-\eta \hat{\lambda}_{i})^t \Iprod{\hat{\phi}_i}{f^*}_{\rho(n)} \hat{u}_i.  
\]
Henceforth, we assume that $m_{\ell} < n$; the case $m_{\ell} \ge n$ can be shown similarly with the fact that $\lambda_{m_{\ell}} \le \lambda_n$. 
Since $\Iprod{\hat{\phi}_i}{f^*}_{\rho(n)}=\frac{1}{n}\sum_{i=1}^n (\hat u_i^\top \bm{y})^2=\frac{1}{n}\norm{\bm{y}}^2\le 1$, we have 
\begin{equation}
\label{eq:norm-main}
\norm{\frac{1}{\sqrt{n}}(\bm{I} - \eta\bm{K})^{t} \bm{y}}^2
\le (1-\eta \hat{\lambda}_{m_\ell})^{2t}+\sum_{i=m_\ell +1}^n \Iprod{\hat{\phi}_i}{f^*}_{\rho(n)}^2.
\end{equation}

Next we analyze the second term in \eqref{eq:norm-main}.
Let $\phi_1,\phi_2,\dots$ be orthonormal eigenfunctions of $L_{\calK}$ with strictly positive eigenvalues 
$\lambda_1, \lambda_2, \cdots $, respectively. 
%
Let $\gamma_j : = \iprod{f^*}{\phi_j}_{\rho}$. 
It holds that 
\begin{align}
\label{eq: function decomposition bound}
\nonumber
&\sum_{i=m_\ell +1}^n \pth{1-\eta \hat{\lambda}_{i}}^{2t} \pth{\iprod{\hat{\phi}_i}{f^*}_{\rho(n)}}^2
\overset{(a)}{\le} \sum_{i=m_\ell +1}^n \pth{\iprod{\hat{\phi}_i}{f^*}_{\rho(n)}}^2\\
 \nonumber
& = \sum_{i=m_\ell +1}^n \pth{\iprod{\hat{\phi}_i}{\sum_{j=1}^{m_{\ell}} \gamma_j \phi_j}_{\rho(n)} +   \iprod{\hat{\phi}_i}{f^* - \sum_{j=1}^{m_{\ell}} \gamma_j \phi_j}_{\rho(n)}}^2 \\
& \le 2\sum_{i=m_\ell +1}^n \pth{\iprod{\hat{\phi}_i}{\sum_{j=1}^{m_{\ell}} \gamma_j \phi_j}_{\rho(n)} }^2 + 2  \sum_{i=m_\ell +1}^n \pth{ \iprod{\hat{\phi}_i}{f^* - \sum_{j=1}^{m_{\ell}} \gamma_j \phi_j}_{\rho(n)}}^2,
\end{align}
where inequality (a) holds because that $0<\hat{\lambda}_i \le 1$. 
The first term in 
\eqref{eq: function decomposition bound} can be bounded as 
\begin{align}
\label{eq: thm 4 aaa}
\sum_{i=m_{\ell}+1}^n \iprod{\hat{\phi}_i}{\sum_{j=1}^{m_{\ell}} \gamma_j \phi_j}^2_{\rho(n)}  
& \overset{(a)}{\le}   \sum_{i=m_{\ell}+1}^n \pth{\sum_{j=1}^{m_{\ell}} \gamma_j^2} \sum_{j=1}^{m_{\ell}} \iprod{\hat{\phi}_i}{\phi_j}^2_{\rho(n)}
\overset{(b)}{\le} \sum_{i=m_{\ell}+1}^n \sum_{j=1}^{m_{\ell}} \iprod{\hat{\phi}_i}{\phi_j}^2_{\rho(n)}, 
\end{align}
where inequality (a) follows from Cauchy-Schwarz inequality, and inequality (b) is true because that $\sum_{j=1}^{m_{\ell}} \gamma_j^2\le 1$. In addition, for any $\delta\in (0,1)$, with probability at least $1-\frac{\delta}{2}$, it holds that 
\begin{align}
\label{eq: projection bound}
\sum_{i=m_{\ell}+1}^n \sum_{j=1}^{m_{\ell}} \iprod{\hat{\phi}_i}{\phi_j}^2_{\rho(n)} 
\le \frac{64\hat{\lambda}_{m_{\ell}+1}\log \frac{2}{\delta}}{\lambda_{m_{\ell}}(\lambda_{m_{\ell}} - \lambda_{m_{\ell}+1})^2 n}. 
\end{align}
We postpone the proof of \eqref{eq: projection bound} to Section \ref{subsec: eq 44}. 
%
We bound the second term in 
 \eqref{eq: function decomposition bound} as  
\begin{align}
\label{eq: thm 4 bbb}
\sum_{i=m_{\ell}+1}^n \iprod{\hat{\phi}_i}{f^* - \sum_{j=1}^{m_{\ell}} \gamma_j \phi_j}^2_{\rho(n)}  
\le \frac{1}{n} \sum_{k=1}^n \pth{f^*(x_k) - \sum_{j=1}^{m_{\ell}} \gamma_j \phi_j (x_k)}^2 
\le \epsilon^2(f^*, \ell). 
\end{align}
In addition, by  \eqref{eq:eigen-concentration-n} and the assumption that 
$n> \frac{256\log \frac{2}{\delta}}{(\lambda_{m_{\ell}} -  \lambda_{m_{\ell}+1})^2}$, 
with probability at least $1-\delta$, 
\begin{align}
\label{eq: thm: eigen bound}
\hat{\lambda}_{m_\ell}\ge \frac{3} {4}\lambda_{m_\ell}, ~\text{and}~ \hat{\lambda}_{m_\ell+1} \le\lambda_{m_\ell}. 
\end{align}
By \eqref{eq: function decomposition bound}, \eqref{eq: thm 4 aaa}, \eqref{eq: projection bound}, \eqref{eq: thm 4 bbb}, and \eqref{eq: thm: eigen bound}, we continue to bound \eqref{eq:norm-main} as: 
for any $\delta\in (0, \frac{1}{4})$, with probability at least $1-2\delta$, 
\begin{align*}
\norm{\frac{1}{\sqrt{n}}\pth{\bm{I} - \eta\bm{K}}^{t}\bm{y} }^2 
&\le   \pth{1-\frac{3\eta}{4}\lambda_{m_\ell} }^{2t}  +  \frac{ 128 \log 2/\delta}{(\lambda_{m_\ell} -  \lambda_{{m_\ell}+1})^2 n} + 2\epsilon^2(\ell, f^*).  
\end{align*}

\subsection{Proof of Eq.\ \eqref{eq: projection bound}}
\label{subsec: eq 44}
\paragraph{Preliminaries}
Recall from \eqref{eq:eigen-concentration-n} that the spectrum of $\bm{K}$ concentrates on the spectrum of the integral operator $L_{\calK}$. 
To show \eqref{eq: projection bound}, we need to know how $\phi_i, \,i\ge 1$ the eigenfunctions of $L_{\calK}$ and $\hat{\phi}_i,\, 1\le i \le n$ the eigenfunctions of $\bm{K}$ are related. Though both $L_{\calK}$ and $\bm{K}$ are defined w.\,r.\,t.\,the kernel function $\calK$ (defined in \eqref{eq: initial kernel random}), investigating this relation is not easy. This is because that $\phi_i$ is defined on $L^2(\calS^{d-1}, \rho)$, whereas $\hat{\phi}_i$ is defined on $L^2(\calS^{d-1}, \rho(n))$. 
To overcome this difficulty, we relate $L_{\calK}$ and $\bm{K}$ to two linear operators $T_{\calH}$ and $T_n$, respectively, on $\calH$ the {\em reproducing kernel Hilbert space} (RKHS) associated with the kernel function $\calK$.  
In particular, we define $T_{\calH}$ and $T_n$ by  
\begin{align*}
&T_{\calH}f=\int_{\calS^{d-1}} \Iprod{f}{\calK_x}_\calH \calK_x \diff \rho(x), ~ \text{and} ~~   T_n f =\frac{1}{n}\sum_{i=1}^n\Iprod{\cdot}{\calK_{x_i}}_\calH \calK_{x_i}.
\end{align*}
Here $\Iprod{\cdot}{\cdot}_\calH$ is the inner product with the RKHS $\calH$ that satisfies $f(x)=\Iprod{f}{\calK_x}_\calH$ for $f\in \calH$, where $\calK_x=\calK(x,\cdot)$. It has been shown  that the spectra of $L_{\calK}$ and $T_{\calH}$ are the same, possibly up to the zero, and that the spectra of $\bm{K}$ and $T_n$ are the same, possibly up to the zero. More importantly, clear correspondences between $L_{\calK}$ and $T_{\calH}$ and between $\bm{K}$ and $T_n$ are established. See \cite[item 2 of Proposition 8]{rosasco2010learning} and \cite[item 2 of Proposition 9]{rosasco2010learning}  for details. Notably, there is a notational issue in \cite{rosasco2010learning} which leads to an error in the
multipliers in the correspondences. But this error can be fixed easily, and our calculation reflects this correction. 

\paragraph{Proof}
We first show that $\sum_{i=m_{\ell}+1}^n \sum_{j=1}^{m_{\ell}} \iprod{\hat{\phi}_i}{\phi_j}^2_{\rho(n)}$ can be upper bounded with 
(1) the difference between the projection of $T_{\calH}$ onto its first $m_{\ell}$ eigenfunctions and that of $T_{n}$, 
and (2) the correspondences between the eigenfunctions of $L_{\calK}$ and $T_{\calH}$ and between that of $\bm{K}$ and $T_n$. 
Then we apply existing bound on the projection difference to conclude the proof. 

Let $v_1, \cdots, v_{m_{\ell}}, \cdots $ be the orthonormal set of functions in $\calH$ that related to $\phi_1, \cdots, \phi_{m_{\ell}}, \cdots$ by the relation given by \cite[item 2 of Proposition 8]{rosasco2010learning}. Similarly, let $\hat{v}_1, \cdots, \hat{v}_n$ be the corresponding Nystrom extension given by \cite[item 2 of Proposition 9]{rosasco2010learning}. Complete $\sth{v_i}_{i\ge1}$ and $\sth{\hat{v}_i}_{1\le i \le n}$, respectively, to orthonormal bases of $\calH$. 
%
Define two projection operators as follows: 
\[
P^{T_{\calH}}=\sum_{j=1}^{{m_\ell}} \Iprod{\cdot}{v_j}_\calH v_j,\quad P^{T_n}=\sum_{j=1}^{{m_\ell}} \Iprod{\cdot}{\hat v_j}_\calH \hat v_j.
\]
Since both $(v_j)_{j\ge1}$ and $(\hat{v}_j)_{j\ge1}$ are orthonormal bases for $\calH$, it is true that 
\[
\|P^{T_n} - P^{T_{\calH}}\|^2_{HS}  = \sum_{i\ge 1, j\ge 1}\abth{\iprod{\pth{P^{T_n} - P^{T_{\calH}}}\hat{v}_i}{v_j}}^2, 
\]
where $\norm{\cdot}_{HS}$ denotes the Hilbert–Schmidt norm defined as $\norm{A}_{HS}^2=\sum_{i\in I}\norm{Ae_i}^2$ for an orthonormal basis $\{e_i:i\in I\}$. 
By definition of $P^{T_{\calH}}$ and $P^{T_{n}}$, we have 
\begin{align*}
\iprod{\pth{P^{T_n} - P^{T_{\calH}}}\hat{v}_i}{v_j} &= \iprod{P^{T_n}\hat{v}_i}{v_j} - \iprod{P^{T_{\calH}}\hat{v}_i}{v_j}
& = 
\begin{cases}
0, ~~ & \text{if} ~ 1\le i\le {m_\ell}, \& ~ 1\le j \le {m_\ell}; \\
\iprod{\hat{v}_i}{v_j}_{\calH},  ~~ & \text{if} ~ 1\le i\le {m_\ell}, \& ~  j \ge {m_\ell}+1;\\
- \iprod{\hat{v}_i}{v_j}_{\calH},  ~~ & \text{if} ~ i\ge {m_\ell}+1, \& ~  1\le j \le {m_\ell}; \\
0,  ~~ & \text{if} ~ i\ge {m_\ell}+1, \& ~  j \ge {m_\ell}+1. 
\end{cases}
\end{align*}
Thus we get 
\begin{align*}
\|P^{T_n} - P^{T_{\calH}}\|^2_{HS}  &= \sum_{i=1}^{m_\ell} \sum_{j\ge {m_\ell}+1} \pth{\iprod{\hat{v}_i}{v_j}_{\calH}}^2 + \sum_{i\ge {m_\ell}+1} \sum_{j=1}^{m_\ell} \pth{\iprod{\hat{v}_i}{v_j}_{\calH}}^2 
\ge \sum_{i = {m_\ell}+1}^n  \sum_{j=1}^{m_\ell} \pth{\iprod{\hat{v}_i}{v_j}_{\calH}}^2. 
\end{align*}
Since with probability 1 over the data generation $\hat{\lambda}_i>0$ for $i=1, \cdots, n$,  
for $1\le i \le n$, we have 
\[
\pth{\iprod{\hat{v}_i}{v_j}_{\calH}}^2 = \frac{1}{\hat{\lambda}_i} \iprod{\hat{\phi}_i}{v_j}^2_{\rho(n)}. 
\]
So it holds that 
%
\begin{align*}
\sum_{i= {m_\ell}+1}^n \sum_{j=1}^{m_\ell} \pth{\iprod{\hat{v}_i}{v_j}_{\calH}}^2 
& \ge \frac{1}{\hat{\lambda}_{{m_\ell}+1}} \sum_{i= {m_\ell}+1}^n \sum_{j=1}^{m_\ell} \pth{\iprod{\hat{\phi}_i}{v_j}_{\rho(n)}}^2  
 \overset{(b)}{=}  \frac{1}{\hat{\lambda}_{{m_\ell}+1}} \sum_{i= {m_\ell}+1}^n \sum_{j=1}^{m_\ell} \pth{\iprod{\hat{\phi}_i}{\sqrt{\lambda_j}\phi_j}_{\rho(n)}}^2\\
& \ge  \frac{\lambda_{m_{\ell}}}{\hat{\lambda}_{{m_\ell}+1}} \sum_{i= {m_\ell}+1}^n \sum_{j=1}^{m_\ell} \pth{\iprod{\hat{\phi}_i}{\phi_j}_{\rho(n)}}^2, 
\end{align*}
where equality (b) follows from \cite[Proposition 8, item 2]{rosasco2010learning}. 
Since $n>\frac{256\log \frac{2}{\delta}}{(\lambda_{m_{\ell}} -  \lambda_{m_{\ell}+1})^2}$, 
by \cite[Theorem 7 and Proposition 6]{rosasco2010learning}, it holds that with probability at least $1-\frac{\delta}{2}$,
\[
\|P^{T_n} - P^{T_{\calH}}\|^2_{HS} \le \frac{64 \log \frac{2}{\delta}}{(\lambda_{m_\ell} -  \lambda_{{m_\ell}+1})^2 n},  
\]
finishing the proof of Eq.\ \eqref{eq: projection bound}. 


%
\end{proof}

\newcommand{\Floor}[1]{\lfloor {#1} \rfloor}

\section{Harmonic analysis on spheres}
\label{app: Harmonic analysis: gegenbauer}
Throughout this section, we consider uniform distribution $\rho$ on the unit sphere in $\reals^d$ with $d\ge 3$, and we consider functions on 
on $\calS^{d-1}$. For ease of exposition,  we do not explicitly write out the dependence on $d$ in the notations.

%
Let $\calH_\ell$ denote the space of degree-$\ell$ homogeneous harmonic polynomials on $\calS^{d-1}$:
\[
\calH_\ell=\sth{P: \calS^{d-1}\mapsto \reals: P(x)=\sum_{|\alpha|=\ell}c_\alpha x^{\alpha},\Delta P=0},
\]
where $x^{\alpha}=x_1^{\alpha_1}\cdots x_d^{\alpha_d}$ is a monomial with degree $|\alpha|=\alpha_1 + \cdots + \alpha_d$, $c_{\alpha}\in \reals$, and $\Delta$ is the Laplacian operator. 
The dimension of $\calH_\ell$ is denoted by $N_\ell=\frac{(2\ell+d-2)(\ell+d-3)!}{\ell!(d-2)!}$. For any $\ell$ and $\ell^{\prime}$, the spaces $\calH_\ell$ and $\calH_{\ell^{\prime}}$ are orthogonal to each other. 

%
%
The Gegenbauer polynomials, denoted by $C_{\ell}^{(\lambda)}$ for $\lambda>-\frac{1}{2}$ and $\ell =0, 1, \cdots $, are defined on $[-1, 1]$
as 
\begin{align}
\label{eq: gegenbauer polynomials}
C_\ell^{(\lambda)}(u)=\sum_{k=0}^{\Floor{\ell/2}}(-1)^k\frac{\Gamma(\ell-k+\lambda)}{\Gamma(\lambda)k!(\ell-2k)!}(2u)^{\ell-2k},
\end{align}
where $\Gamma(v) := \int_{0}^{\infty} z^{v-1}e^{-z} \diff z$. Notably, $\Gamma(v+1) = z \Gamma(v)$. 
The cases $\lambda=0,\frac{1}{2},1$ correspond to Chebyshev polynomials of the first kind, Legendre polynomials, Chebyshev polynomials of the second kind, respectively. 
It has been shown that \cite[Section 4.1(2), Section 4.7]{orthogonal.poly} 
for $\lambda \not=0$ Gegenbauer polynomials are orthogonal with the weight function $w_\lambda(u)=(1-u^2)^{\lambda-\frac{1}{2}}$:
\begin{align*}
\int_{-1}^1C_\ell^{(\lambda)}(u)C_k^{(\lambda)}(u) w_\lambda(u)\diff u =  \frac{\pi 2^{1-2\lambda}\Gamma(\ell+2\lambda)}{\ell!(\ell+\lambda)(\Gamma(\lambda))^2}\delta_{k,\ell},
\end{align*}
where $\delta_{k,\ell} =1$ if $k=\ell$ and $\delta_{k,\ell}=0$ otherwise. 
The orthogonality can be equivalently written as 
\[
\int_{0}^\pi C_\ell^{(\lambda)}(\cos\theta)C_k^{(\lambda)}(\cos\theta) \sin^{2\lambda}\theta\diff \theta =  \frac{\pi 2^{1-2\lambda}\Gamma(\ell+2\lambda)}{\ell!(\ell+\lambda)(\Gamma(\lambda))^2}\delta_{k,\ell}.
\]

For each $\ell \in \naturals$, there exists a set of orthonormal basis $\{Y_{\ell,i}:i=1,\dots,N_{\ell}\}$ for $\calH_{\ell}$ w.\,r.\,t.\,the uniform distribution $\rho$ that can be written in terms of $C_\ell^{(\lambda)}$ in \cite[Theorem 1.5.1]{DX2013} as 
\begin{align}
\label{eq: addition theorem}
C_\ell^{(\lambda)}(\Iprod{x}{y}) = \frac{\lambda}{\ell+\lambda}\sum_{i=1}^{N_{\ell}}Y_{\ell,i}(x)Y_{\ell,i}(y),\quad \lambda=\frac{d-2}{2}. 
\end{align}
This is known as the {\em addition theorem}. 
Therefore, a function of the form $f(x,y)=f(\Iprod{x}{y})$ (i.e., the value of $f(x,y)$ depends on $x$ and $y$ through their angle $\Iprod{x}{y}$ only) 
can be expanded under $C_\ell^{(\lambda)}$ as 
\begin{align}
f(x,y)&=f(\Iprod{x}{y}) =\sum_{\ell\ge 0}\alpha_\ell C_\ell^{(\lambda)}(u) 
 = \sum_{\ell\ge 0} \frac{\alpha_\ell\lambda}{\ell+\lambda} \sum_{i=1}^{N_\ell}Y_{\ell,i}(x)Y_{\ell,i}(y) ~~ \label{eq: kernel ortho decom},
\end{align}
where  $u=\iprod{x}{y}$, $\lambda=\frac{d-2}{2}$,  and 
\[
\alpha_\ell=\frac{\int_{-1}^1f(u)C_\ell^{(\lambda)}(u)w_\lambda(u)\diff u}{\int_{-1}^1(C_\ell^{(\lambda)}(u))^2w_\lambda(u)\diff u}. 
\]

For the kernel function defined in \eqref{eq: kernel with bias},  
it can be expanded as  
\begin{align*}
 \calK(x,s)&=\sum_{\ell\ge 0}  \beta_{\ell} \sum_{i=1}^{N_\ell}Y_{\ell,i}(x)Y_{\ell,i}(s), ~~~ \text{where } \beta_{\ell}: =  \frac{\alpha_\ell \frac{d-2}{2}}{\ell + \frac{d-2}{2}}, 
\end{align*}
where for each $\ell\ge 0$, $\alpha_{\ell}$ is the coefficient of $\calK(x,s)$ in the expansion into Gegenbauer polynomials,  
$ \beta_{\ell}$ is the eigenvalue associated with the space of degree--$\ell$ homogeneous harmonic polynomials on $\calS^{d-1}$, denoted by $\calH^{\ell}$, and  $Y_{\ell, i} $ for  $ i =1, \cdots, N_{\ell}$ are an orthonormal basis of $\calH^{\ell}$. Thus, the corresponding integral operator can be decomposed as $L_{\calK} = \sum_{\ell\ge 0}  \beta_{\ell} P_{\ell}$.

\end{document}